\PassOptionsToPackage{table}{xcolor}
\documentclass{article}
\usepackage{iclr2027_conference,times}
\iclrfinalcopy
\usepackage{amsmath,amssymb,amsthm}
\usepackage{booktabs}
\usepackage{graphicx}
\usepackage{float}
\usepackage{microtype}
\usepackage{multirow}
\usepackage{needspace}
\usepackage{tabularx}
\usepackage{xcolor}
\usepackage{hyperref}
\usepackage[capitalise,noabbrev]{cleveref}
\usepackage{url}

\newcommand\hmmax{0}
\newcommand\bmmax{0}

\usepackage{amsmath,amsfonts,bm}

\def\eqref#1{equation~\ref{#1}}

\def\1{\bm{1}}

\DeclareMathAlphabet{\mathsfit}{\encodingdefault}{\sfdefault}{m}{sl}
\SetMathAlphabet{\mathsfit}{bold}{\encodingdefault}{\sfdefault}{bx}{n}

\newcommand{\E}{\mathbb{E}}

\newcommand{\cC}{\mathcal{C}}

\newcommand{\cL}{\mathcal{L}}

\newcommand{\method}{\textsc{CT-OPD}}
\newcommand{\model}{\textsc{DiMoE-VL}}
\newcommand{\modelbase}{\textsc{DiMoE-VL-Base}}

\newcommand{\teacher}{\mathrm{T}}
\newcommand{\student}{\mathrm{S}}
\newcommand{\masktok}{\mathtt{[M]}}
\definecolor{tablehead}{HTML}{DCE8F2}
\definecolor{tablegroup}{HTML}{EEF1F4}
\definecolor{tablebase}{HTML}{F7F8F9}
\definecolor{tableours}{HTML}{EAF3FA}
\definecolor{tablebest}{HTML}{D8EAF7}
\definecolor{tablegain}{HTML}{B5403A}
\definecolor{tableloss}{HTML}{2F6FA3}
\definecolor{tablemuted}{HTML}{66717B}
\definecolor{tableline}{HTML}{8895A1}
\newcommand{\ctgain}[2]{#1\,\textcolor{tablegain}{\scriptsize (#2)}}
\newcommand{\ctbestgain}[2]{{\bfseries #1\,\textcolor{tablegain}{\scriptsize (#2)}}}
\newcommand{\ctloss}[2]{#1\,\textcolor{tableloss}{\scriptsize (#2)}}

\DeclareMathOperator{\Tok}{Tok}
\DeclareMathOperator{\Detok}{Detok}

\IfFileExists{generated/LLADAV_EXPLICIT_FINAL_HEADLINE.tex}{

}{}
\IfFileExists{generated/CROSS_ARCHITECTURE_ACC_LOOSE_V2_COMPLETE.tex}{
    \input{generated/CROSS_ARCHITECTURE_ACC_LOOSE_V2_HEADLINE.tex}
}{}
\IfFileExists{generated/LLADAV_CT_OPD_MAIN_COMPLETE.tex}{

}{}
\IfFileExists{generated/DIMOE_CT_OPD_RAW_BASELINE_COMPLETE.tex}{

}{}

\newtheorem{proposition}{Proposition}
\crefname{proposition}{proposition}{propositions}

\title{CT-OPD: Counterfactual Trace\\
On-Policy Distillation for\\
Diffusion Vision-Language Models}

\author{
Long Qian$^{1,2}$,
Bingke Zhu$^{1,2}$\thanks{Corresponding author.},
Jiaqi Wei$^{1,3}$\footnotemark[2],
Yu Li$^{1,4}$,
Yingying Chen$^{1,2}$,
Jinqiao Wang$^{1,2,5}$
\\
$^{1}$Foundation Model Research Center, Institute of Automation, Chinese Academy of Sciences
\\
$^{2}$School of Future Technology, University of Chinese Academy of Sciences
\\
$^{3}$School of Engineering, Cardiff University
\\
$^{4}$School of Advanced Interdisciplinary Sciences, University of Chinese Academy of Sciences
\\
$^{5}$Wuhan AI Research
\\
\texttt{\{qianlong2024,liyu2025\}@ia.ac.cn,}
\texttt{WeiJ16@cardiff.ac.uk,}
\\
\texttt{\{bingke.zhu,yingying.chen,jqwang\}@nlpr.ia.ac.cn}
}

\begin{document}
\raggedbottom

\maketitle
\begingroup
\renewcommand{\thefootnote}{\fnsymbol{footnote}}
\footnotetext[2]{Work done during internships at Institute of Automation, Chinese Academy of Sciences.}
\endgroup
\begin{abstract}
Diffusion vision-language models generate answers by gradually resolving masked
tokens, making accurate conditional prediction in partially resolved states
central to post-training. Masking completed answers yields coherent contexts
and targets, but prescribed masks do not reflect the model's reveal decisions.
Its trajectories capture these decisions, yet their provisional visible tokens
can conflict with the target response. Outcome-based reinforcement learning
follows these trajectories but provides only response-level feedback, which
loses contrast when sampled rewards tie.
To align coherent token-level supervision with the model's reveal decisions,
we introduce Counterfactual Trace On-Policy Distillation (\method{}), which
combines completed teacher responses with trajectory masks from the current
student. CT-OPD retokenizes each teacher response in the student's vocabulary
and extracts unresolved-position masks at successive stages of the student's
reverse process. For each mask, it discards provisional rollout values and
reconstructs the partial state from the teacher endpoint, so the supervised
positions follow the current trajectory while the visible context and targets
remain consistent with the same response. The student is trained on these
reconstructed states with its native categorical loss, and trajectories are
refreshed as the model evolves.
Across dense and sparse diffusion architectures, CT-OPD consistently enhances
multimodal understanding and reasoning capabilities, with gains of up to 9.80
points on the nine-benchmark average. On the unified understanding-and-generation architecture, it
also improves both visual understanding and image generation, showing that the
same principle transfers across architectures and
modalities. Ablations further attribute these gains to coherent reconstruction
and current-model trajectory masks.

\end{abstract}

\section{Introduction}
\label{sec:introduction}

Diffusion vision-language models (VLMs) generate responses by repeatedly
predicting masked token positions. Beginning with a fully masked canvas, the
reverse process gradually reveals an answer. At every stage, unresolved tokens
are inferred from the image, the question, and the portion of the response
already available \citep{you2025lladav,li2025lavida,yang2025mmada}. Unlike
left-to-right decoding, this process revisits the answer as its visible content
changes, so each step presents a different conditional prediction problem.
Effective post-training therefore requires accurate supervision across the
partial states encountered throughout this reverse process.

Existing post-training methods address different aspects of this requirement.
As shown in Fig.~\ref{fig:motivation}, supervised fine-tuning or distillation starts from a high-quality completed
response, masks some of its tokens, and trains the model to recover them.
Because the visible tokens and prediction targets come from the same response,
the resulting task is precise and internally consistent. However, its masks
are chosen independently of the current model: a position that the model
repeatedly leaves unresolved receives no special attention over one it already
resolves reliably.  Outcome-based reinforcement learning follows the model's
own completions, but a score assigned after the response is finished cannot
identify which intermediate prediction caused an error or which token should
replace it.  This loss of detail is especially consequential for weak models.
In our 512-prompt LLaDA-V probe as shown in Tab.~\ref{tab:zero_contrast_learning}, 60.74\% of four-response groups receive identical
correctness rewards, leaving centered group-relative objectives with no task
advantage even though token-level corrections are available from a strong
teacher.  Autoregressive (AR) on-policy distillation offers another piece of the
solution by querying a teacher on model-generated prefixes
\citep{agarwal2023opd}.  Its causal query, however, is different from the
bidirectional partial canvas native to diffusion decoding, and teacher logits
do not align directly when the two models use different tokenizers.

Together, these limitations reveal what a diffusion post-training state
must contain: the positions to supervise should reflect the model's current
reverse trajectory, while the visible context and token targets should
describe one coherent response. These ingredients come from different
sources: a completed response provides the tokens to form consistent
context and targets, whereas a rollout trace records the model's reveal
decisions through its mask pattern.
Yet the trace carries provisional visible tokens, which serve as context
for predicting the unresolved positions. Transferring the trace intact and
replacing only its targets can therefore create a mismatch: whenever a visible
trace token differs from the completed response, the model sees part of one
response and is asked to complete another. Drawing a fresh mask from a
prescribed schedule over the completed response removes this mismatch, but
also discards the model-specific reveal pattern that makes the trace useful.
We therefore retain only the unresolved-position pattern as a
\emph{trajectory mask} and place it over the completed response, or
\emph{endpoint}. Within that response, the mask identifies the positions left
unresolved by the current trajectory, while the endpoint supplies both visible
context and targets. The reconstructed partial state thus combines
trajectory-aware supervision with
one coherent conditional prediction problem.

\begin{figure}[t]
    \centering
    \includegraphics[width=\textwidth]{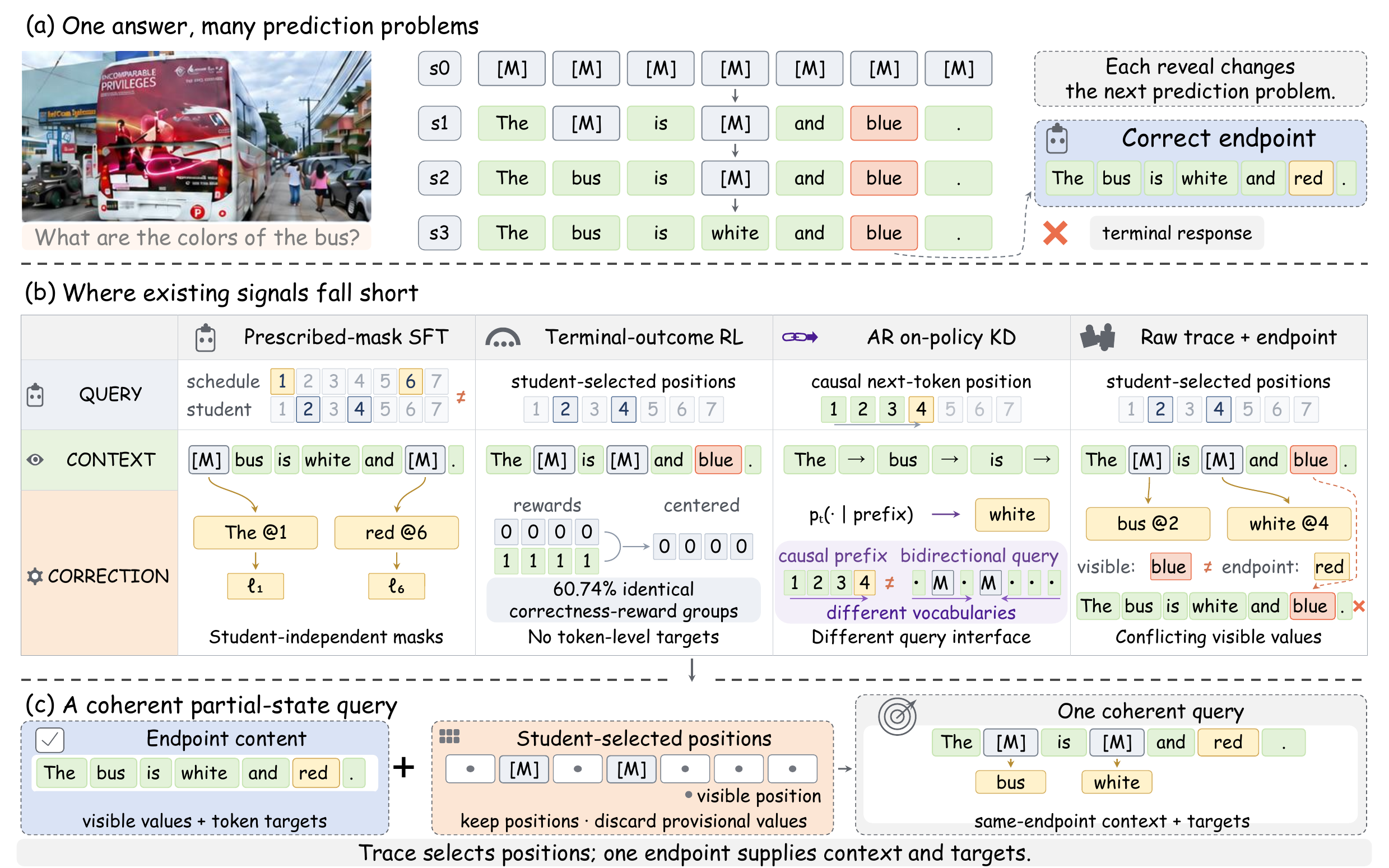}
    \caption{\textbf{Motivation of CT-OPD.} Completed responses provide coherent
    content but do not specify the student's unresolved positions. Student
    trajectories reveal these positions, yet their provisional tokens may
    conflict with endpoint targets. CT-OPD combines trajectory-selected
    positions with context and targets from one endpoint.}
    \label{fig:motivation}
\end{figure}

To address these challenges, we propose \textbf{Counterfactual Trace On-Policy
Distillation (\method{})}, a post-training method that builds coherent
partial-response tasks along the current model's reverse trajectory. CT-OPD
serializes a completed teacher response, retokenizes it with the
student's tokenizer, and adapts the result to the student's endpoint
convention. At the beginning of each training cycle, the current student
completes a roolout on the same image-question prompt, yielding unresolved-position
masks at the fully masked, early, middle, and late stages of decoding. CT-OPD
discards the rollout's provisional token values and overlays each mask on the
retokenized endpoint: endpoint tokens fill the visible positions, while
endpoint tokens under the mask provide the prediction targets. Each
reconstructed state follows the student's current reveal pattern
while presenting a context and correction drawn from one response. The
student optimizes these four states sequentially with its native categorical
diffusion loss, then generates a fresh trajectory for the next cycle. Since
the procedure transfers student-tokenized endpoints and trajectory masks
instead of teacher logits, it applies across mismatched tokenizers,
diffusion architectures, and both text and image token streams.

Across dense and sparse diffusion architectures, CT-OPD strengthens multimodal
understanding and reasoning, improving the nine-task average by up to 9.80
points over the baseline and consistently outperforming existing post-training
methods. Comparisons with random-mask diffusion SFT separate the benefit of
partial-state training from trajectory-based position selection, while
controlled studies further establish the roles of coherent reconstruction and
current, prompt-matched masks. To further evaluate the generality of CT-OPD beyond visual understanding, we
apply it to unified MMaDA. The same construction improves both understanding
and image generation, demonstrating that CT-OPD transfers across architectures
and output modalities. Our contributions
are:
\begin{itemize}
    \item We formulate diffusion-VLM post-training as the construction of
    coherent partial-response prediction tasks, separating response content
    from trajectory-dependent position selection and exposing the
    context-target mismatch introduced by retaining provisional rollout values.
    \item We introduce CT-OPD, an on-policy distillation method that turns
    teacher responses and student trajectories into coherent token-level
    supervision, adapting training tasks to the evolving student across
    tokenizers, diffusion architectures, and output modalities.
    \item We demonstrate consistent performance across dense,
    sparse, and unified diffusion VLMs. CT-OPD achieves the best nine-benchmark
    understanding average on every evaluated backbone, raises LLaDA-V by 9.80
    points over the baseline, and improves all three image-generation metrics
    on MMaDA, while ablations verify each component of the design.
\end{itemize}

\section{Related Work}
\label{sec:related}

\paragraph{Diffusion vision-language models.}
Masked-diffusion language models predict unresolved tokens from bidirectional
context \citep{nie2025llada}, a formulation extended to multimodal understanding
and generation by LLaDA-V, LaViDa, MMaDA, and later systems
\citep{you2025lladav,li2025lavida,yang2025mmada,zeng2025diffusionvl,wu2026fastdvlm,chen2026bard}.
Mixture-of-experts models scale capacity through conditional computation
\citep{fedus2021switch}, also used in AR VLMs and text diffusion
\citep{lin2024moellava,zhu2025lladamoe}.  In sparse \model{}, CT-OPD's
reconstructed partial state drives both expert selection and token prediction
under one native loss.

\paragraph{Reinforcement learning for diffusion models.}
Diffusion post-training uses reward-weighted denoising, reverse-state policy
gradients, intermediate branching, trajectory balance, and multimodal tree search
\citep{zhu2025dmpo,he2025mdpo,oba2026dispo,ahmadi2026trafl,li2026lavidar1}.
TraceRL, d3LLM, and MaskGRPO further exploit preferred trajectories, decoding
orders, and grouped outcomes \citep{wang2026tracerl,qian2026d3llm,ma2026maskgrpo}.
CT-OPD uses the reverse process to select unresolved positions, then couples
them to a verified endpoint for dense token supervision when group rewards
coincide and centered task advantages vanish.

\paragraph{On-policy and cross-tokenizer distillation.}
AR on-policy distillation queries a teacher on student-generated prefixes
\citep{agarwal2023opd}, while sequence-level and cross-tokenizer methods
transfer complete responses or align distributions across vocabularies
\citep{kim2016sequencekd,boizard2024uld,minixhofer2025alm,zhang2025dskd,wang2026bpm}.
For diffusion students, OPDLM derives causal prefixes from student endpoints, and
TOPD matches masked-state distributions from a compatible teacher, and dOPSD
constructs privileged targets from later trajectory states, and OPTD
compresses future reverse transitions for few-step decoding
\citep{su2026opdlm,ren2026topd,dat2026dopsd,lu2026optd}.  CT-OPD instead
retokenizes an answer-verified AR VLM response for visible context and targets,
using student traces to select the masked positions.

\section{Method}
\label{sec:method}

CT-OPD forms each training state from two pieces with different roles.  A
teacher endpoint determines the response to be learned, while a
\emph{trajectory mask}-the positions left unresolved at one stage of the
current model's reverse process-determines where that response is predicted.
Expressing the endpoint in the student's vocabulary and reconstructing a
partial state with its tokens under this mask allows completed teacher answers
to supervise the model's own sequence of reveal stages.

\subsection{Teacher endpoints and trajectory masks}

\begin{figure}[t]
    \centering
    \includegraphics[width=\textwidth]{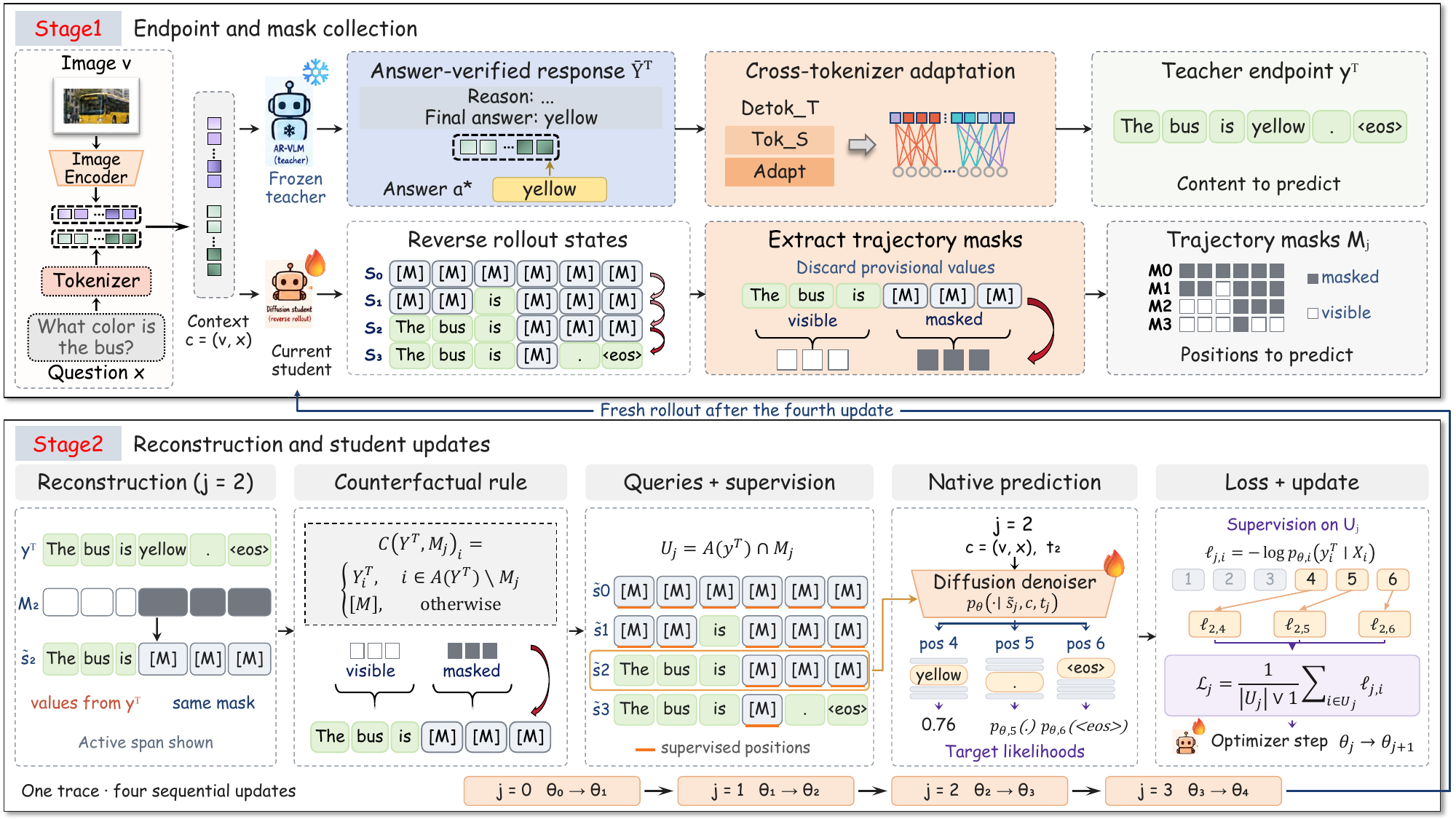}
    \caption{\textbf{Overview of CT-OPD.} A completed teacher response is
    retokenized as the student endpoint, while a current-student rollout supplies
    trajectory masks at four denoising stages.  CT-OPD overlays each mask on the
    endpoint to form coherent partial states and supervises unresolved
    endpoint tokens with the native categorical loss. The updated student
    generates a fresh trajectory for the next cycle.}
    \label{fig:method}
\end{figure}

A teacher answer needs adaptation when tokenization or termination conventions
differ. Given image $v$, question $x$, and training answer $a^*$, let
$c=(v,x)$. The AR VLM generates response $\bar Y^\teacher$ with a rationale
and verified final answer. We serialize and retokenize it, then apply the
endpoint adapter $\operatorname{Adm}_a$ for student architecture $a$:
\begin{equation}
Y^\teacher=\operatorname{Adm}_a\!\left[
\Tok_\student\!\left(\Detok_\teacher(\bar Y^\teacher)\right)\right].
\label{eq:cross_tokenizer}
\end{equation}
$\operatorname{Adm}_a$ fits the canvas by trimming the rationale as needed,
retaining the verified answer suffix and appending the native end-of-turn
token. Adapted endpoints have law
$\pi_\teacher(dy\mid c,a^*)$ and define the active span scored through the
student's native head without teacher logits on masked states.

With the endpoint in student tokens, we choose where to place its masks.
A prescribed schedule does not reflect the student's reveal decisions, so each
cycle freezes the parameters as $\bar\theta$ and samples a reverse trace $Z$.
The selector $G_j$ extracts the unresolved positions at stage $j$:
\begin{equation}
Z\sim\mathbb P_{\bar\theta}^{\mathrm{rev}}(\cdot\mid c),
\qquad M_j=G_j(Z),
\qquad \rho_{\bar\theta,j}(m\mid c)=\Pr(M_j=m\mid c).
\label{eq:trajectory_mask_distribution}
\end{equation}
Each mask $M_j$ lies on the common length-$L$ canvas
$\mathcal I=\{1,\ldots,L\}$.  If native scoring requires metadata $H_j$ that
is not determined by $M_j$, we write $\Xi_j=(M_j,H_j)$ for the complete
trajectory descriptor, and otherwise $\Xi_j=M_j$.  Set operations below act only
on $M_j$, and the rollout's tentative response values are discarded.

\Needspace{5\baselineskip}
The trajectory mask depends on the question and current student, so it stays
with its originating prompt.  Given a training example, the teacher endpoint
and raw trace are sampled independently and paired by that prompt, and the trace's
provisional response values play no role in their pairing.

\subsection{Constructing coherent partial states}

A trajectory mask identifies missing positions, while the raw trace from which
it was extracted may contain visible values that disagree with the teacher, and
its canvas may also extend beyond the retokenized endpoint.  Let
$A(y)\subseteq\mathcal I$ contain the endpoint's active coordinates, including
its supervised terminal token.  We project a raw mask
$m\subseteq\mathcal I$ onto this span and fill every visible active coordinate
from $y$:
\begin{equation}
\cC(y,m)_i=
\begin{cases}
y_i,&i\in A(y)\setminus m,\\
\masktok,&\text{otherwise},
\end{cases}
\qquad U(y,m)=A(y)\cap m.
\label{eq:active_transport}
\end{equation}
Thus $U(y,m)$ contains exactly the active positions to score, and coordinates
outside $A(y)$ remain masked and receive no loss.  Replacing raw visible values
does not change the positions selected by the trajectory mask within the
endpoint span.  It changes the question asked there: the student now completes
the same response that supplies its targets.  These requirements uniquely
determine $\cC(y,m)$ (\Cref{prop:occupancy_projection}), which the native
adapter maps to a valid query for each architecture.

At fixed context, answer, and stage, an endpoint and current-model trajectory
mask are sampled independently. Reconstruction defines their joint law
$\widetilde d_{\bar\theta,j}$ with the partial state:
\begin{equation}
\widetilde d_{\bar\theta,j}(dy,dm,ds\mid c,a^*)
=\pi_\teacher(dy\mid c,a^*)\,
\rho_{\bar\theta,j}(dm\mid c)\,
\delta_{\cC(y,m)}(ds).
\label{eq:projected_occupancy_law}
\end{equation}
The product law keeps endpoint content and mask placement distinct, while
$s=\cC(y,m)$ makes their output one prediction task. The mask determines target
coordinates and which endpoint tokens are visible.
Relative to prescribed time-$t$ corruption $\nu_t(m\mid y,c)$ in diffusion
SFT, the queried positions follow the current student's unresolved-set
occupancy and all
visible values follow the endpoint. The mask comes from an on-policy rollout, and its reconstructed canvas is
counterfactual because provisional rollout values have been replaced by
endpoint tokens from the same teacher response.

\Needspace{7\baselineskip}
\subsection{Native denoising supervision}

Reconstruction now defines the query and its target, and the remaining step is to
score that query through the student's own prediction interface.  The native
input adapter $\operatorname{Input}_a$ maps a reconstruction to a
canvas for bidirectional heads or to clean-prefix/current-block queries for
block heads (\Cref{app:bd3_validity}).  Let
$p^{(a)}_{\theta,i}(\cdot\mid X_i)$ denote the categorical predictor using the
information $X_i$ available when querying position $i$.  CT-OPD minimizes
\begin{equation}
\cL_{\mathrm{CT}}(\theta;\bar\theta)
=\E\!\left[-\frac{1}{|U|\vee1}
\sum_{i\in U}\log p^{(a)}_{\theta,i}(Y_i^\teacher\mid X_i)\right],
\quad X=\operatorname{Input}_a(c,Y^\teacher,\Xi_j).
\label{eq:ct_objective}
\end{equation}
With $U=U(Y^\teacher,M_j)$, the expectation samples training contexts,
endpoints, traces, and a uniform stage $j\in\{0,1,2,3\}$. Empty target sets
contribute zero, and per-example token averaging prevents early states from
receiving more weight solely because they contain more targets.

The fixed trajectory-mask distribution determines which native queries receive
weight, and the categorical score is strictly proper for their conditional
targets.  Its Bayes-optimal predictor is the teacher-token distribution induced by the
reconstructed states and their per-example weights, and excess over that Bayes
risk is a nonnegative weighted conditional KL (\Cref{prop:properness}).  Under the
native reverse-event construction, the same categorical loss equals a
reveal-normalized KL that distinguishes persistence from token prediction, and
for mask-excluding heads this reduces to the absorbing reverse-token KL
(\Cref{app:properness}).  Bidirectional canvases and clean-prefix/current-block
heads can therefore learn through their own conditional factorization.
Because each imperfect supervised token also has a nonzero logit-level
correction when terminal rewards coincide, the objective retains a direct
signal in the zero-contrast groups examined empirically in \Cref{sec:signal_learning}
(\Cref{prop:terminal_starvation}).

\paragraph{How trajectory masks allocate supervision.}
Reconstruction makes each target meaningful in its visible context, and
trajectory-mask selection determines which coherent conditional tasks the
student practices.  At a fixed endpoint, scorer, and active target count,
a trajectory mask and an equal-count uniform mask can select different positions
and leave different endpoint tokens visible around them.  Their risk
difference therefore contains both position-allocation and joint-context
terms, separated exactly in \Cref{eq:trace_allocation}.  With native metadata
matched, a fully masked endpoint has a unique active mask, so position
selection matters as the response becomes partially resolved, where the
student's reveal pattern determines
which conditional queries receive supervision at that stage.

To separate position selection from changes in visible context, we compare
positions within a single reconstructed parent state. Let $U_j$ be unresolved
positions at that state and
$K_j\subset U_j$ the nonempty set still unresolved at the next stage, and write
$R_j=U_j\setminus K_j$ for the positions just revealed.  For any fixed
per-position score $q_i$, write
$\overline q_B=|B|^{-1}\sum_{i\in B}q_i$.  A uniformly selected
$|K_j|$-subset of $U_j$ has expected mean $\overline q_{U_j}$, and
\begin{equation}
\overline q_{K_j}-\overline q_{U_j}
=\frac{|R_j|}{|U_j|}\bigl(\overline q_{K_j}-\overline q_{R_j}\bigr).
\label{eq:main_retained_revealed}
\end{equation}
The left side measures the retained-position excess over an equal-count
uniform subset. We apply this identity to conditional prediction in
\Cref{sec:trajectory_masks}.

\subsection{Online updates across stages and modalities}

CT-OPD refreshes trajectory masks at the start of each cycle to follow the
evolving student. At cycle $k$, one rollout supplies masks at the fully
masked, early, middle, and late denoising stages:
\begin{align}
\bar\theta_k&=\theta_{k,0},\qquad
Z_k\sim\mathbb P_{\bar\theta_k}^{\mathrm{rev}}(\cdot\mid c),
&M_{k,j}&=G_j(Z_k),\nonumber\\
\theta_{k,j+1}&=\operatorname{OptStep}
\bigl(\theta_{k,j},\widehat\cL_{k,j}\bigr),
&&j=0,\ldots,3.
\label{eq:cycle_update}
\end{align}
All four masks stay fixed during the updates, and $\widehat\cL_{k,j}$ averages
stage-$j$ CT losses over the minibatch at parameters $\theta_{k,j}$.
The stages train both
initial response prediction and completion with progressively more endpoint
context. Reusing one trace
amortizes rollout cost over four updates, while refreshing it each cycle adapts
supervision to the student's changing reveal decisions.

The same construction applies to text and image token streams through their
native endpoint encodings and prediction heads. In sparse \model{}, the loss
updates the decoder, router, and selected experts on each reconstructed state.
For unified MMaDA, text and image trajectories supply masks for their respective
endpoints, and the two branches alternate updates to one shared model.
\Cref{app:experiments,app:method} specify all adapters and optimization details.

\section{Experiments}
\label{sec:experiments}

We evaluate CT-OPD across dense, sparse, and unified diffusion VLMs and trace
its gains to three design choices: learning from partial responses, making
visible context agree with the teacher endpoint, and assigning each question
the positions selected by its current student.

\subsection{Experimental setup}

\paragraph{Models and data.}
We study LLaDA-V \citep{you2025lladav}, LaViDa-LLaDA \citep{li2025lavida}, and
our sparse \modelbase{}. Qwen3-VL-32B-Instruct \citep{bai2025qwen3vl} supplies
answer-anchored endpoints for Vision-SR1-47K \citep{li2026visionsr1}, and each
student retokenizes the resulting text under its own response convention.
Vision encoders remain frozen. For unified MMaDA-8B-MixCoT
\citep{yang2025mmada}, training alternates between Qwen3-VL understanding
endpoints and Qwen-Image-2512 \citep{wu2025qwenimage} images encoded by the
frozen native MAGVITv2 tokenizer \citep{yu2024magvitv2}.
\Cref{app:experiments} details data construction, endpoint adapters, training,
and decoding.

\paragraph{Training protocol.}
On all three understanding backbones, CT-OPD uses a 128-token response canvas,
32 reverse steps, and $\{100,75,50,25\}\%$ mask ratios. AdamW
\citep{loshchilov2019adamw} uses global batch 16 and learning rate
$3\times10^{-7}$, and each endpoint trace supplies four sequential updates. All comparison methods use matched training budgets and comparable optimization
settings unless otherwise specified.

\paragraph{Comparisons.}
Base is the initial checkpoint. Endpoint-only and Random share CT-OPD's
teacher endpoints, initialization, optimizer, and update budget. Endpoint-only
trains on fully masked responses, whereas Random masks uniformly selected
positions in teacher responses while matching CT-OPD's stagewise counts and
nested reveal schedule. OPDLM+BPM combines OPDLM \citep{su2026opdlm}
with BPM \citep{wang2026bpm}: the AR teacher scores
causal prefixes of student completions, and BPM aligns its distributions across
tokenizers for forward-KL supervision at masked trajectory positions, retaining
the student's visible rollout tokens (\Cref{app:opdlm_bpm}). Direct Trace-Target
also retains these visible tokens but supervises masked positions with
teacher-endpoint tokens. Trajectory-mask variants vary the trace source
and prompt pairing to examine online refresh and prompt specificity.
MDPO \citep{he2025mdpo}, MaskGRPO \citep{ma2026maskgrpo}, and UniGRPO
\citep{yang2025mmada} are matched-compute diffusion-native RL baselines
(\Cref{app:rl_configuration}). We evaluate the three backbones on nine
understanding benchmarks, and MMaDA on six understanding and three generation
metrics.

\subsection{Results across architectures and modalities}

\begin{table}[t]
\centering
\caption{\textbf{CT-OPD improves dense, sparse, and unified diffusion-VLMs.}
Parentheses give changes from Base (red: gain; blue: decrease); bold marks the
best result per backbone. Endpoint-only uses full masks; Random matches mask
counts and budget; OPDLM+BPM aligns teacher distributions.}
\label{tab:ctopd_unified_main}
\setlength{\tabcolsep}{1.6pt}
\renewcommand{\arraystretch}{1.16}
\arrayrulecolor{tableline}
\scriptsize
\renewcommand{\ctgain}[2]{#1\,\textcolor{tablegain}{\tiny (#2)}}
\renewcommand{\ctbestgain}[2]{{\bfseries #1\,\textcolor{tablegain}{\tiny (#2)}}}
\renewcommand{\ctloss}[2]{#1\,\textcolor{tableloss}{\tiny (#2)}}
\resizebox{\textwidth}{!}{%
\begin{tabular}{@{}lrrrrrrrrrr@{}}
\toprule
\multirow[c]{2}{*}[-0.65ex]{\textbf{Method}}\rule{0pt}{2.7ex}
& \multicolumn{5}{c}{\textbf{Perception and knowledge}}
& \multicolumn{4}{c}{\textbf{Scientific reasoning}}
& \multirow[c]{2}{*}[-0.65ex]{\textbf{\textit{Avg.}}} \\
\cmidrule(lr){2-6}\cmidrule(lr){7-10}
\rule{0pt}{2.5ex}
& MMMU & MMMU-Pro & MMStar & MMBench & SEEDBench
& MathVerse & MathVista & GPQA & GPQA-D & \\
\midrule
\rowcolor{tablegroup}
\multicolumn{11}{l}{\textit{Dense diffusion VLMs}} \\
\rowcolor{tablebase}
LLaDA-V
& 41.78 & 19.02 & 50.07 & 69.45 & 69.70
& 22.72 & 53.60 & 16.96 & 16.67 & 40.00 \\
\rowcolor{tablebase}
\quad + MaskGRPO
& \ctgain{42.33}{+0.55} & \ctgain{25.66}{+6.64}
& \ctgain{52.20}{+2.13} & \ctgain{72.74}{+3.29} & \ctgain{69.79}{+0.09}
& \ctloss{14.72}{$-$8.00} & \ctloss{44.40}{$-$9.20}
& \ctgain{28.57}{+11.61} & \ctgain{29.29}{+12.62}
& \ctgain{42.19}{+2.19} \\
\rowcolor{tablebase}
\quad + MDPO
& \ctloss{41.56}{$-$0.22} & \ctgain{19.25}{+0.23}
& \ctloss{49.27}{$-$0.80} & \ctgain{71.83}{+2.38} & \ctgain{69.87}{+0.17}
& \ctloss{13.45}{$-$9.27} & \ctloss{40.70}{$-$12.90}
& \ctgain{22.77}{+5.81} & \ctgain{24.24}{+7.57}
& \ctloss{39.22}{$-$0.78} \\
\rowcolor{tablebase}
\quad + Endpoint-only
& \ctgain{44.33}{+2.55} & \ctgain{23.87}{+4.85}
& \ctgain{52.93}{+2.86} & \ctgain{75.87}{+6.42} & \ctgain{70.37}{+0.67}
& \ctloss{19.54}{$-$3.18} & \ctloss{48.90}{$-$4.70}
& \ctgain{25.45}{+8.49} & \ctgain{26.77}{+10.10}
& \ctgain{43.12}{+3.12} \\
\rowcolor{tablebase}
\quad + Random
& \ctgain{42.00}{+0.22} & \ctgain{28.55}{+9.54}
& \ctbestgain{56.00}{+5.93} & \ctgain{74.91}{+5.46}
& \ctloss{69.26}{$-$0.44} & \ctgain{23.10}{+0.38}
& \ctgain{56.60}{+3.00} & \ctgain{28.35}{+11.38}
& \ctbestgain{33.84}{+17.17} & \ctgain{45.85}{+5.85} \\
\rowcolor{tablebase}
\quad + OPDLM+BPM
& \ctgain{44.00}{+2.22} & \ctgain{29.60}{+10.58}
& \ctgain{54.40}{+4.33} & \ctgain{80.72}{+11.26}
& \ctgain{70.78}{+1.08} & \ctgain{24.37}{+1.65}
& \ctgain{60.10}{+6.50} & \ctgain{29.46}{+12.50}
& \ctgain{30.81}{+14.14} & \ctgain{47.14}{+7.14} \\
\rowcolor{tableours}
\quad\textbf{+ \method{}}
& \ctbestgain{46.22}{+4.44} & \ctbestgain{33.12}{+14.10}
& \ctgain{55.87}{+5.80} & \ctbestgain{83.16}{+13.71} & \ctbestgain{74.26}{+4.56}
& \ctbestgain{27.79}{+5.08} & \ctbestgain{61.30}{+7.70}
& \ctbestgain{32.59}{+15.62} & \ctbestgain{33.84}{+17.17}
& \cellcolor{tablebest}\ctbestgain{49.79}{+9.80} \\
\addlinespace[1.5pt]
\rowcolor{tablebase}
LaViDa-LLaDA
& 42.00 & 28.79 & 46.27 & 70.50 & 65.98
& 17.51 & 33.00 & 30.13 & 27.78 & 40.22 \\
\rowcolor{tablebase}
\quad + MaskGRPO
& \ctgain{42.67}{+0.67} & \ctloss{24.10}{$-$4.69}
& \ctgain{46.53}{+0.26} & \ctgain{71.40}{+0.90} & \ctloss{65.64}{$-$0.34}
& \ctloss{16.12}{$-$1.39} & \ctgain{34.70}{+1.70}
& \ctloss{29.91}{$-$0.22} & \ctloss{25.76}{$-$2.02}
& \ctloss{39.65}{$-$0.57} \\
\rowcolor{tablebase}
\quad + MDPO
& \ctloss{41.89}{$-$0.11} & \ctloss{24.34}{$-$4.45}
& \ctloss{43.33}{$-$2.94} & \ctgain{70.94}{+0.44} & \ctgain{66.03}{+0.05}
& \ctgain{17.77}{+0.26} & \ctloss{31.00}{$-$2.00}
& \ctbestgain{33.04}{+2.91} & \ctbestgain{32.83}{+5.05}
& \ctloss{40.13}{$-$0.09} \\
\rowcolor{tablebase}
\quad + Endpoint-only
& \ctgain{42.67}{+0.67} & \ctgain{29.36}{+0.57}
& \ctloss{45.60}{$-$0.67} & \ctbestgain{75.51}{+5.01} & \ctgain{66.34}{+0.36}
& \ctbestgain{18.65}{+1.14} & \ctloss{28.30}{$-$4.70}
& \ctloss{29.46}{$-$0.67} & \ctloss{26.77}{$-$1.01}
& \ctgain{40.30}{+0.08} \\
\rowcolor{tablebase}
\quad + Random
& \ctgain{42.11}{+0.11} & \ctgain{28.90}{+0.12}
& \ctgain{46.33}{+0.07} & \ctgain{73.41}{+2.90}
& \ctgain{66.06}{+0.08} & \ctgain{17.89}{+0.38}
& \ctgain{45.70}{+12.70} & \ctloss{29.91}{$-$0.22}
& \ctloss{26.26}{$-$1.52} & \ctgain{41.84}{+1.62} \\
\rowcolor{tablebase}
\quad + OPDLM+BPM
& \ctloss{41.22}{$-$0.78} & \ctgain{29.36}{+0.58}
& \ctloss{46.13}{$-$0.13} & \ctgain{74.57}{+4.07}
& \ctloss{65.49}{$-$0.48} & 17.51
& \ctgain{43.80}{+10.80} & \ctloss{29.02}{$-$1.12}
& \ctloss{24.24}{$-$3.54} & \ctgain{41.26}{+1.04} \\
\rowcolor{tableours}
\quad\textbf{+ \method{}}
& \ctbestgain{43.67}{+1.67} & \ctbestgain{29.83}{+1.04}
& \ctbestgain{47.00}{+0.73} & \ctgain{75.37}{+4.87} & \ctbestgain{66.53}{+0.55}
& \ctgain{18.15}{+0.63} & \ctbestgain{47.60}{+14.60}
& 30.13 & 27.78
& \cellcolor{tablebest}\ctbestgain{42.90}{+2.68} \\
\midrule
\rowcolor{tablegroup}
\multicolumn{11}{l}{\textit{Sparse diffusion VLM}} \\
\rowcolor{tablebase}
\modelbase{}
& 32.67 & \textbf{21.85} & 40.00 & 71.30 & 61.34
& 18.65 & 34.60 & 25.45 & 25.25 & 36.79 \\
\rowcolor{tablebase}
\quad + MaskGRPO
& \ctgain{33.56}{+0.89} & \ctloss{20.75}{$-$1.10}
& \ctgain{41.13}{+1.13} & \ctgain{72.17}{+0.87} & \ctgain{64.18}{+2.84}
& \ctloss{17.89}{$-$0.76} & \ctloss{32.00}{$-$2.60}
& \ctloss{23.44}{$-$2.01} & \ctloss{18.18}{$-$7.07}
& \ctloss{35.92}{$-$0.87} \\
\rowcolor{tablebase}
\quad + MDPO
& \ctgain{37.78}{+5.11} & \ctloss{21.62}{$-$0.23}
& \ctloss{38.93}{$-$1.07} & \ctbestgain{76.60}{+5.30} & \ctgain{62.21}{+0.87}
& \ctloss{14.97}{$-$3.68} & \ctloss{34.20}{$-$0.40}
& \ctgain{26.79}{+1.34} & \ctgain{30.30}{+5.05}
& \ctgain{38.16}{+1.37} \\
\rowcolor{tablebase}
\quad + Endpoint-only
& \ctgain{35.33}{+2.66} & \ctloss{21.62}{$-$0.23}
& \ctloss{38.07}{$-$1.93} & \ctloss{58.44}{$-$12.86} & \ctloss{55.99}{$-$5.35}
& \ctloss{18.53}{$-$0.12} & \ctgain{39.30}{+4.70}
& \ctgain{26.79}{+1.34} & \ctgain{26.77}{+1.52}
& \ctloss{35.65}{$-$1.14} \\
\rowcolor{tablebase}
\quad + Random
& \ctgain{40.89}{+8.22} & \ctloss{19.31}{$-$2.54}
& \ctloss{38.07}{$-$1.93} & \ctloss{59.13}{$-$12.18}
& \ctloss{58.80}{$-$2.54} & \ctgain{18.91}{+0.25}
& \ctgain{44.50}{+9.90} & \ctbestgain{30.13}{+4.69}
& \ctgain{30.30}{+5.05} & \ctgain{37.78}{+0.99} \\
\rowcolor{tablebase}
\quad + OPDLM+BPM
& \ctgain{39.00}{+6.33} & \ctloss{20.92}{$-$0.92}
& \ctloss{39.20}{$-$0.80} & \ctloss{69.09}{$-$2.22}
& \ctgain{63.79}{+2.45} & \ctgain{20.18}{+1.52}
& \ctgain{42.10}{+7.50} & \ctgain{28.57}{+3.12}
& \ctgain{28.28}{+3.03} & \ctgain{39.01}{+2.22} \\
\rowcolor{tableours}
\quad\textbf{+ \method{}}
& \ctbestgain{41.11}{+8.44} & \ctloss{21.39}{$-$0.46}
& \ctbestgain{42.73}{+2.73} & \ctgain{75.85}{+4.55} & \ctbestgain{65.94}{+4.60}
& \ctbestgain{20.69}{+2.03} & \ctbestgain{44.70}{+10.10}
& \ctbestgain{30.13}{+4.69} & \ctbestgain{30.81}{+5.56}
& \cellcolor{tablebest}\ctbestgain{41.48}{+4.69} \\
\midrule
\rowcolor{tablegroup}
\multicolumn{11}{l}{\textit{Unified understanding-and-generation diffusion VLM}} \\
\multirow[c]{2}{*}[-0.65ex]{\textbf{Method}}
& \multicolumn{7}{c}{\textbf{Multimodal understanding}}
& \multicolumn{3}{c}{\textbf{Image generation}} \\
\cmidrule(lr){2-8}\cmidrule(lr){9-11}
& POPE & MME & GQA & MMMU & MMBench & SEEDBench
& \textbf{\textit{Avg.}} & GenEval & CLIP & ImageReward \\
\midrule
\rowcolor{tablebase}
MMaDA-8B-MixCoT
& 68.99 & 62.85 & 48.68
& 31.43 & 44.07 & 56.13
& 52.03 & 55.45 & 28.43
& 0.585 \\
\rowcolor{tablebase}
\quad + UniGRPO
& \ctloss{65.68}{$-$3.31}
& \ctgain{64.81}{+1.96}
& \ctloss{48.39}{$-$0.29}
& \ctloss{30.33}{$-$1.10}
& \ctgain{44.16}{+0.09}
& \ctgain{56.55}{+0.42}
& \ctloss{51.65}{$-$0.38}
& \ctgain{59.90}{+4.44}
& \ctgain{28.81}{+0.38}
& \ctgain{0.752}{+0.167} \\
\rowcolor{tablebase}
\quad + Endpoint-only
& \ctgain{78.71}{+9.72}
& \ctbestgain{65.07}{+2.22}
& \ctgain{48.73}{+0.05}
& \ctgain{32.57}{+1.14}
& \ctgain{53.94}{+9.87}
& \ctgain{59.28}{+3.15}
& \ctgain{56.38}{+4.35}
& \ctgain{60.70}{+5.25}
& \ctgain{28.99}{+0.56}
& \ctgain{0.732}{+0.147} \\
\rowcolor{tablebase}
\quad + Random
& \ctgain{79.50}{+10.51} & \ctgain{63.14}{+0.29}
& \ctgain{55.11}{+6.43} & \ctgain{32.95}{+1.52}
& \ctgain{59.30}{+15.22} & \ctgain{59.98}{+3.85}
& \ctgain{58.33}{+6.31} & \ctgain{60.06}{+4.60}
& \ctgain{29.01}{+0.58} & \ctgain{0.765}{+0.180} \\
\rowcolor{tableours}
\quad\textbf{+ \method{}}
& \ctbestgain{79.72}{+10.73}
& \ctgain{63.48}{+0.63}
& \ctbestgain{56.38}{+7.70}
& \ctbestgain{34.19}{+2.76}
& \ctbestgain{62.14}{+18.07}
& \ctbestgain{61.13}{+5.00}
& \cellcolor{tablebest}\ctbestgain{59.51}{+7.48}
& \ctbestgain{61.69}{+6.23}
& \ctbestgain{29.16}{+0.73}
& \ctbestgain{0.774}{+0.189} \\
\bottomrule
\end{tabular}%
}
\arrayrulecolor{black}
\end{table}

\Cref{tab:ctopd_unified_main} shows that CT-OPD achieves the highest
nine-benchmark average on all three understanding backbones. It outperforms
OPDLM+BPM on every benchmark, raising the averages by 2.66, 1.63, and 2.47
points on LLaDA-V, LaViDa-LLaDA, and sparse \modelbase{}, respectively.

Random improves on Endpoint-only by learning to complete partially visible
teacher responses. With teacher content and mask counts matched, CT-OPD
surpasses Random by 3.95, 1.05, and 3.70 points on average, demonstrating the
benefit of student-selected positions. Relative to Base, CT-OPD improves all
nine LLaDA-V tasks, gaining 9.80 points on average, while LaViDa-LLaDA's
14.60-point gain on MathVista highlights improved visual mathematical reasoning.

\Needspace{5\baselineskip}
On unified MMaDA, CT-OPD achieves the highest understanding average and the
best results on all three generation metrics, improving over Base by 7.48
points in understanding and 6.23 on GenEval. It surpasses Random by 1.17 and
1.63 points, respectively, and reaches 29.16 CLIP and 0.774 ImageReward,
compared with Random's 29.01 and 0.765. UniGRPO improves generation but reduces
understanding by 0.38 points, whereas CT-OPD benefits both modalities through
the same reconstruction: trajectory masks specify unresolved positions, and
teacher endpoints supply coherent context and targets in both text and image
token spaces. \Cref{tab:mmada_joint_generation,fig:mmada_generation_qualitative}
give category-level results and matched generations across compositional
image-generation settings.

\subsection{Learning from partial responses}
\label{sec:signal_learning}

To understand these gains, we compare CT-OPD and Endpoint-only on the same
reconstructed states, testing whether partial-state training improves the
student's use of progressively revealed context. We examine predictions across
denoising stages and then focus on prompts with tied
terminal rewards, where outcome feedback provides no contrast.

\begin{table}[t]
\centering
\caption{\textbf{Learning under zero terminal reward contrast.} Coverage spans
four stages on 311 zero-contrast prompts; prediction uses their 292 all-wrong
prompts. Parentheses give CT-OPD differences with paired 95\% CIs below;
NLL is in nats/token and accuracy in percentage points.}
\label{tab:zero_contrast_learning}
\vspace{1pt}
\arrayrulecolor{tableline}
\scriptsize
\setlength{\tabcolsep}{3.2pt}
\renewcommand{\arraystretch}{1.08}
\begin{tabularx}{\textwidth}{@{}>{\raggedright\arraybackslash}p{0.285\textwidth}*{4}{>{\centering\arraybackslash}X}@{}}
\toprule
\rowcolor{tablegroup}
\multicolumn{5}{l}{\textit{Terminal contrast and zero-contrast CT coverage}} \\
\rowcolor{tablebase}
Frozen terminal groups & \multicolumn{2}{c}{Zero contrast: \textbf{60.74\%}} & \multicolumn{2}{c}{With contrast: 39.26\%} \\
\rowcolor{tableours}
CT coverage (zero contrast) & \multicolumn{4}{c}{\makebox[0.66\textwidth][c]{\makebox[0.22\textwidth][c]{Prompt: \textbf{100.00\%}}\makebox[0.22\textwidth][c]{State: \textbf{91.48\%}}\makebox[0.22\textwidth][c]{Token: \textbf{60.79\%}}}} \\
\midrule
\rowcolor{tablegroup}
\multicolumn{3}{l}{\textit{Common-state learning on 292 all-wrong groups}} & \multicolumn{2}{r}{\textcolor{tablemuted}{876 states; 35,935 scored tokens}} \\
Method & \multicolumn{2}{c}{NLL $\downarrow$} & \multicolumn{2}{c}{Acc. $\uparrow$} \\
\midrule
\rowcolor{tablebase}
Base (frozen) & 4.6073 & \textcolor{tablegain}{\tiny ($-$1.2412)} & 20.56 & \textcolor{tablegain}{\tiny (+6.87)} \\
\rowcolor{tablebase}
 & \multicolumn{2}{c}{\textcolor{tablemuted}{\tiny 95\% CI [$-$1.3331, $-$1.1545]}} & \multicolumn{2}{c}{\textcolor{tablemuted}{\tiny 95\% CI [6.11, 7.66]}} \\
\rowcolor{tablebase}
Direct Trace-Target (raw) & 4.1878 & \textcolor{tablegain}{\tiny ($-$0.8217)} & 13.68 & \textcolor{tablegain}{\tiny (+13.75)} \\
\rowcolor{tablebase}
 & \multicolumn{2}{c}{\textcolor{tablemuted}{\tiny 95\% CI [$-$0.8775, $-$0.7658]}} & \multicolumn{2}{c}{\textcolor{tablemuted}{\tiny 95\% CI [12.55, 15.01]}} \\
\rowcolor{tablebase}
Endpoint-only ($4\times100\%$) & 3.5736 & \textcolor{tablegain}{\tiny ($-$0.2075)} & 24.15 & \textcolor{tablegain}{\tiny (+3.28)} \\
\rowcolor{tablebase}
 & \multicolumn{2}{c}{\textcolor{tablemuted}{\tiny 95\% CI [$-$0.2320, $-$0.1833]}} & \multicolumn{2}{c}{\textcolor{tablemuted}{\tiny 95\% CI [2.75, 3.84]}} \\
\rowcolor{tablebase}
Random & 3.5202 & \textcolor{tablegain}{\tiny ($-$0.1541)} & 26.12 & \textcolor{tablegain}{\tiny (+1.31)} \\
\rowcolor{tablebase}
 & \multicolumn{2}{c}{\textcolor{tablemuted}{\tiny 95\% CI [$-$0.1802, $-$0.1290]}} & \multicolumn{2}{c}{\textcolor{tablemuted}{\tiny 95\% CI [0.91, 1.71]}} \\
\rowcolor{tableours}
\textbf{CT-OPD (student trace)} & \multicolumn{2}{c}{\textbf{3.3661}} & \multicolumn{2}{c}{\textbf{27.43}} \\
\midrule
\rowcolor{tablegroup}
\multicolumn{3}{l}{\textit{Partial-state learning across the reverse process}} & \multicolumn{2}{r}{\textcolor{tablemuted}{1,168 states; 60,864 scored tokens}} \\
Mask & End NLL $\downarrow$ & CT NLL $\downarrow$ & End acc. $\uparrow$ & CT acc. $\uparrow$ \\
\midrule
\rowcolor{tablebase}
100\% & \textbf{4.132} & 4.234\,\textcolor{tableloss}{\tiny (+0.1016)} & \textbf{15.88} & 14.94\,\textcolor{tableloss}{\tiny ($-$0.94)} \\[-0.35ex]
\rowcolor{tablebase}
 & \multicolumn{2}{r}{\textcolor{tablemuted}{\tiny 95\% CI [0.0869, 0.1167]}} & \multicolumn{2}{r}{\textcolor{tablemuted}{\tiny 95\% CI [$-$1.26, $-$0.63]}} \\[0.1ex]
\rowcolor{tablebase}
75\% & 3.959 & \textbf{3.819}\,\textcolor{tablegain}{\tiny ($-$0.1406)} & 17.52 & \textbf{19.26}\,\textcolor{tablegain}{\tiny (+1.75)} \\[-0.35ex]
\rowcolor{tablebase}
 & \multicolumn{2}{r}{\textcolor{tablemuted}{\tiny 95\% CI [$-$0.1667, $-$0.1147]}} & \multicolumn{2}{r}{\textcolor{tablemuted}{\tiny 95\% CI [1.19, 2.33]}} \\[0.1ex]
\rowcolor{tablebase}
50\% & 3.464 & \textbf{3.226}\,\textcolor{tablegain}{\tiny ($-$0.2386)} & 25.60 & \textbf{29.38}\,\textcolor{tablegain}{\tiny (+3.78)} \\[-0.35ex]
\rowcolor{tablebase}
 & \multicolumn{2}{r}{\textcolor{tablemuted}{\tiny 95\% CI [$-$0.2759, $-$0.2018]}} & \multicolumn{2}{r}{\textcolor{tablemuted}{\tiny 95\% CI [2.79, 4.81]}} \\[0.1ex]
\rowcolor{tablebase}
25\% & 2.531 & \textbf{2.169}\,\textcolor{tablegain}{\tiny ($-$0.3622)} & 42.95 & \textbf{50.27}\,\textcolor{tablegain}{\tiny (+7.31)} \\[-0.35ex]
\rowcolor{tablebase}
 & \multicolumn{2}{r}{\textcolor{tablemuted}{\tiny 95\% CI [$-$0.4239, $-$0.3014]}} & \multicolumn{2}{r}{\textcolor{tablemuted}{\tiny 95\% CI [5.82, 8.93]}} \\[0.1ex]
\bottomrule
\end{tabularx}
\arrayrulecolor{black}
\end{table}

The stagewise probe shows where the additional training tasks pay off.
Endpoint-only predicts best from a full mask, where all mask choices collapse
to the same prediction task, and CT-OPD is stronger at every measured partial
state, with its NLL advantage growing from 0.14 at 75\% mask to 0.36 at 25\%.
The normalized loss gives each remaining target greater weight as the active
target set shrinks.
Both checkpoints
are evaluated on the same reconstructed states, so this progression reflects
their learned ability to use the context progressively exposed by decoding in
each partial-state query.

Terminal group-relative updates lose contrast when completed rewards tie.
Four frozen Base completions split 512 prompts into all-wrong, all-correct,
and mixed groups, and 60.74\% have identical correctness rewards.  Endpoint
reconstruction supplies token targets for every zero-contrast prompt, covering
91.48\% of their common evaluation states and 60.79\% of endpoint-token
positions across stages.  On the 292 all-wrong prompts, CT-OPD achieves the
lowest teacher-target NLL and highest token accuracy, with statistically
significant paired improvements over Endpoint-only and Random
(\Cref{tab:zero_contrast_learning}).  These results demonstrate effective
conditional learning on prompts where sampled terminal rewards provide no
within-group contrast.

\subsection{Coherent teacher context}

Position transfer remains ambiguous if the raw trace retains the
student's provisional response.  Direct Trace-Target shares CT-OPD's endpoints,
online trajectory-mask selection protocol, and exposure but scores teacher
tokens beside the trace's provisional visible values.  When one differs from
the endpoint, context and target describe different responses.  CT-OPD instead
fills visible positions from the same endpoint, lifting LLaDA-V's nine-task average from 45.99 to
49.79.  On common reconstructed states, CT-OPD has lower NLL
and higher token accuracy, with an advantage over Direct Trace-Target that
grows as more endpoint content becomes visible
(\Cref{app:conditional_prediction_probe}).  The trace thus selects the
queries, while the endpoint supplies targets together with the visible
response context.

\subsection{Student-selected, prompt-matched trajectory masks}
\label{sec:trajectory_masks}

After coherent reconstruction, the remaining design choice is which positions
should stay unresolved. Even at the
same mask size, different position sets expose different targets and visible
context. Each visible endpoint token supplies conditioning evidence for
predicting the remaining teacher targets. We therefore vary the mask source (current student or frozen Base),
prompt assignment (matched or cross-prompt), and position selection
(trajectory-based or count-matched random), while holding endpoint content,
stage exposure, initialization, and optimization fixed.

\begin{table}[t]
\centering
\caption{\textbf{Online, prompt-matched trajectory masks give the strongest transfer.}
The controls vary mask source and prompt pairing; Random preserves
mask counts while randomizing positions.  The lower block measures positions
retained by the trajectory on the same reconstructed parent.}
\label{tab:trace_ablation}
\setlength{\tabcolsep}{3.15pt}
\renewcommand{\arraystretch}{1.07}
\arrayrulecolor{tableline}
\scriptsize
\resizebox{\textwidth}{!}{%
\begin{tabular}{@{}lrrrrrrrrrr@{}}
\toprule
\textbf{Mask setting}
& \textbf{MMMU} & \textbf{MMMU-Pro} & \textbf{MMStar}
& \textbf{MMBench} & \textbf{SEEDBench} & \textbf{MathVerse}
& \textbf{MathVista} & \textbf{GPQA} & \textbf{GPQA-D}
& \textbf{\textit{Avg.}} \\
\midrule
\rowcolor{tableours}
Online matched (\method{})
& \textbf{46.22} & \textbf{33.12} & 55.87 & \textbf{83.16}
& \textbf{74.26} & \textbf{27.79} & \textbf{61.30}
& \textbf{32.59} & \textbf{33.84} & \cellcolor{tablebest}\textbf{49.79} \\
\rowcolor{tablebase}
Online shuffled
& 44.00 & 31.97 & 53.93 & 79.69 & 72.11
& 25.89 & 58.90 & 31.03 & 29.29 & 47.42 \\
\rowcolor{tablebase}
Frozen-Base matched
& 45.33 & 32.60 & 55.27 & 82.82 & 73.57
& 26.02 & 59.20 & 31.70 & 30.81 & 48.59 \\
\rowcolor{tablebase}
Frozen-Base shuffled
& 42.67 & 31.79 & 53.40 & 76.90 & 71.24
& 24.75 & 58.40 & 29.91 & 29.80 & 46.54 \\
\addlinespace[1pt]
\rowcolor{tablebase}
Random
& 42.00 & 28.55 & \textbf{56.00} & 74.91 & 69.26
& 23.10 & 56.60 & 28.35 & \textbf{33.84} & 45.85 \\
\bottomrule
\end{tabular}%
}
\par\nointerlineskip
\vspace{1.5pt}
\begin{tabularx}{\textwidth}{@{}>{\raggedright\arraybackslash}p{0.27\textwidth}X@{}}
\toprule
\rowcolor{tablegroup}
\multicolumn{2}{l}{\textit{Position-allocation measurements on the frozen Base panel}} \\
\midrule
\rowcolor{tableours}
Position-allocation margins
& $\mathcal{T}=\mathbf{1.1123}$
  \textcolor{tablemuted}{\tiny (95\% CI [0.9325, 1.3053])};
  $\mathfrak M(c)=\mathbf{0.1050}$
  \textcolor{tablemuted}{\tiny (95\% CI [0.0907, 0.1187])} \\
\multicolumn{2}{l}{\textcolor{tablemuted}{\tiny Token-gradient margin is mean unnormalized categorical-logit $\ell_1$ magnitude; paired-prompt 95\% CIs.}} \\
\bottomrule
\end{tabularx}
\end{table}

CT-OPD achieves the highest nine-task average among the five mask settings
(\Cref{tab:trace_ablation}). Prompt matching improves the average by 2.37
points with online traces and 2.05 points with frozen traces, while online
refresh adds 1.20 points under matched pairing and 0.88 points under shuffled
pairing. Both gains persist under either setting of the other factor,
supporting online refresh and prompt matching as complementary design choices.
The online matched setting also leads all nine tasks among the four
trace-source and pairing variants. Random trails it by 3.95 points
despite identical active counts. This advantage over Random extends across
backbones and modalities (\Cref{tab:ctopd_unified_main}).

Because the teacher endpoint replaces the provisional context in which the
student selected its unresolved positions, we test whether those positions
remain difficult in the reconstructed states used for training. Before
training, we fix the Base scorer and
reconstructed parent state, then compare positions retained at the next stage
with an equally sized uniform subset of the parent's unresolved positions.
Applying \Cref{eq:main_retained_revealed} to teacher-target NLL isolates
position selection within this shared context. Retained positions have
$1.1123$ higher teacher-target NLL (95\% CI $[0.9325,1.3053]$) and a larger
mean categorical-logit correction than count-matched random positions
(\Cref{app:difficulty_transport}). Together with the controlled training comparisons, this result shows that
trajectory-based position selection directs supervision toward harder
conditional predictions, while online refresh and prompt matching preserve
this advantage as the model evolves.

\section{Conclusion}
\label{sec:conclusion}

Diffusion VLMs generate through a sequence of partially resolved states, making
intermediate-state supervision central to effective post-training. We
introduced CT-OPD, which separates where supervision is applied from what
response is learned. The current model's trajectory determines the unresolved
positions, while a teacher endpoint supplies both visible context and prediction
targets. By discarding provisional rollout values and reconstructing each
partial state from one coherent endpoint, CT-OPD provides native token-level
supervision along the model's evolving reverse process, including cases where
terminal rewards provide no contrast. Across dense and sparse diffusion VLMs,
CT-OPD consistently improves multimodal understanding and reasoning, while
unified MMaDA shows that the same principle also transfers to image generation.
Controlled studies further attribute these gains to partial-state training,
endpoint-consistent reconstruction, and current, prompt-matched trajectory
masks, with student-retained positions exhibiting higher teacher-target error
than count-matched random positions after reconstruction. Together, these
results support a simple principle for diffusion VLM post-training in which
the model's trajectory determines where to learn, while the teacher endpoint
determines what to learn.




\bibliography{references}

\begin{thebibliography}{33}
\providecommand{\natexlab}[1]{#1}
\providecommand{\url}[1]{\texttt{#1}}
\expandafter\ifx\csname urlstyle\endcsname\relax
  \providecommand{\doi}[1]{doi: #1}\else
  \providecommand{\doi}{doi: \begingroup \urlstyle{rm}\Url}\fi

\bibitem[Agarwal et~al.(2024)Agarwal, Vieillard, Zhou, Stanczyk, Garea, Geist, and Bachem]{agarwal2023opd}
Rishabh Agarwal, Nino Vieillard, Yongchao Zhou, Piotr Stanczyk, Sabela~Ramos Garea, Matthieu Geist, and Olivier Bachem.
\newblock On-policy distillation of language models: Learning from self-generated mistakes.
\newblock In \emph{The Twelfth International Conference on Learning Representations, {ICLR} 2024, Vienna, Austria, May 7-11, 2024}. OpenReview.net, 2024.
\newblock URL \url{https://openreview.net/forum?id=3zKtaqxLhW}.

\bibitem[Ahmadi et~al.(2026)Ahmadi, Parthasarathi, and Cui]{ahmadi2026trafl}
Saba Ahmadi, Prasanna Parthasarathi, and Yufei Cui.
\newblock Beyond mode-seeking {RL:} trajectory-balance post-training for diffusion language models.
\newblock \emph{CoRR}, abs/2605.13935, 2026.
\newblock \doi{10.48550/ARXIV.2605.13935}.
\newblock URL \url{https://doi.org/10.48550/arXiv.2605.13935}.

\bibitem[Bai et~al.(2025)Bai, Cai, Chen, Chen, Chen, Cheng, Deng, Ding, Gao, Ge, Ge, Guo, Huang, Huang, Huang, Hui, Jiang, Li, Li, Li, Li, Lin, Lin, Liu, Liu, Liu, Liu, Liu, Liu, Lu, Luo, Lv, Men, Meng, Ren, Ren, Song, Sun, Tang, Tu, Wan, Wang, Wang, Wang, Wang, Xie, Xu, Xu, Xu, Yang, Yang, Yang, Yang, Yu, Zhang, Zhang, Zhang, Zheng, Zhong, Zhou, Zhou, Zhou, Zhu, and Zhu]{bai2025qwen3vl}
Shuai Bai, Yuxuan Cai, Ruizhe Chen, Keqin Chen, Xionghui Chen, Zesen Cheng, Lianghao Deng, Wei Ding, Chang Gao, Chunjiang Ge, Wenbin Ge, Zhifang Guo, Qidong Huang, Jie Huang, Fei Huang, Binyuan Hui, Shutong Jiang, Zhaohai Li, Mingsheng Li, Mei Li, Kaixin Li, Zicheng Lin, Junyang Lin, Xuejing Liu, Jiawei Liu, Chenglong Liu, Yang Liu, Dayiheng Liu, Shixuan Liu, Dunjie Lu, Ruilin Luo, Chenxu Lv, Rui Men, Lingchen Meng, Xuancheng Ren, Xingzhang Ren, Sibo Song, Yuchong Sun, Jun Tang, Jianhong Tu, Jianqiang Wan, Peng Wang, Pengfei Wang, Qiuyue Wang, Yuxuan Wang, Tianbao Xie, Yiheng Xu, Haiyang Xu, Jin Xu, Zhibo Yang, Mingkun Yang, Jianxin Yang, An~Yang, Bowen Yu, Fei Zhang, Hang Zhang, Xi~Zhang, Bo~Zheng, Humen Zhong, Jingren Zhou, Fan Zhou, Jing Zhou, Yuanzhi Zhu, and Ke~Zhu.
\newblock Qwen3-vl technical report, 2025.
\newblock URL \url{https://arxiv.org/abs/2511.21631}.

\bibitem[Boizard et~al.(2025)Boizard, Haddad, Hudelot, and Colombo]{boizard2024uld}
Nicolas Boizard, Kevin~El Haddad, Céline Hudelot, and Pierre Colombo.
\newblock Towards cross-tokenizer distillation: the universal logit distillation loss for llms, 2025.
\newblock URL \url{https://arxiv.org/abs/2402.12030}.

\bibitem[Chen et~al.(2026)Chen, Xia, Tu, Shi, Zhang, Yao, Yuan, and Zhu]{chen2026bard}
Baoyou Chen, Hanchen Xia, Peng Tu, Haojun Shi, Liwei Zhang, Yuxuan Yao, Weihao Yuan, and Siyu Zhu.
\newblock Bard: Bridging autoregressive and diffusion vision-language models via highly efficient progressive block merging and stage-wise distillation, 2026.
\newblock URL \url{https://arxiv.org/abs/2604.16514}.

\bibitem[Dat et~al.(2026)Dat, Li, and Wang]{dat2026dopsd}
Phuong~Tuan Dat, Qi~Li, and Xinchao Wang.
\newblock dopsd: On-policy self-distillation for diffusion language models.
\newblock \emph{CoRR}, abs/2607.04428, 2026.
\newblock \doi{10.48550/ARXIV.2607.04428}.
\newblock URL \url{https://doi.org/10.48550/arXiv.2607.04428}.

\bibitem[Fedus et~al.(2022)Fedus, Zoph, and Shazeer]{fedus2021switch}
William Fedus, Barret Zoph, and Noam Shazeer.
\newblock Switch transformers: Scaling to trillion parameter models with simple and efficient sparsity.
\newblock \emph{J. Mach. Learn. Res.}, 23:\penalty0 120:1--120:39, 2022.
\newblock URL \url{https://jmlr.org/papers/v23/21-0998.html}.

\bibitem[He et~al.(2025)He, Renz, Cao, and Geiger]{he2025mdpo}
Haoyu He, Katrin Renz, Yong Cao, and Andreas Geiger.
\newblock {MDPO:} overcoming the training-inference divide of masked diffusion language models.
\newblock \emph{CoRR}, abs/2508.13148, 2025.
\newblock \doi{10.48550/ARXIV.2508.13148}.
\newblock URL \url{https://doi.org/10.48550/arXiv.2508.13148}.

\bibitem[Kim \& Rush(2016)Kim and Rush]{kim2016sequencekd}
Yoon Kim and Alexander~M. Rush.
\newblock Sequence-level knowledge distillation, 2016.
\newblock URL \url{https://arxiv.org/abs/1606.07947}.

\bibitem[Li et~al.(2025)Li, Kallidromitis, Bansal, Gokul, Kato, Kozuka, Kuen, Lin, Chang, and Grover]{li2025lavida}
Shufan Li, Konstantinos Kallidromitis, Hritik Bansal, Akash Gokul, Yusuke Kato, Kazuki Kozuka, Jason Kuen, Zhe Lin, Kai{-}Wei Chang, and Aditya Grover.
\newblock Lavida: {A} large diffusion language model for multimodal understanding.
\newblock In Danielle Belgrave, Cheng Zhang, Laura~N. Montoya, Hsuan{-}Tien Lin, Razvan Pascanu, Piotr Koniusz, Marzyeh Ghassemi, Nancy Chen, Iv{\'{a}}n Vladimir~Meza Ru{\'{\i}}z, and Arturo Loaiza{-}Bonilla (eds.), \emph{Advances in Neural Information Processing Systems 38: Annual Conference on Neural Information Processing Systems 2025, NeurIPS 2025, San Diego, CA, USA, December 2-7, 2025 / Mexico City, Mexico, November 30 - December 5, 2025}, 2025.
\newblock URL \url{http://papers.nips.cc/paper\_files/paper/2025/hash/975affbe7b5f3b55fba62247b6877b1c-Abstract-Conference.html}.

\bibitem[Li et~al.(2026{\natexlab{a}})Li, Zhu, Gu, Liu, Lin, Chen, Tao, Grover, and Kuen]{li2026lavidar1}
Shufan Li, Yuchen Zhu, Jiuxiang Gu, Kangning Liu, Zhe Lin, Yongxin Chen, Molei Tao, Aditya Grover, and Jason Kuen.
\newblock Lavida-r1: Advancing reasoning for unified multimodal diffusion language models.
\newblock \emph{CoRR}, abs/2602.14147, 2026{\natexlab{a}}.
\newblock \doi{10.48550/ARXIV.2602.14147}.
\newblock URL \url{https://doi.org/10.48550/arXiv.2602.14147}.

\bibitem[Li et~al.(2026{\natexlab{b}})Li, Yu, Huang, Liang, Liu, Liu, Che, Yu, Boyd-Graber, Mi, and Yu]{li2026visionsr1}
Zongxia Li, Wenhao Yu, Chengsong Huang, Zhenwen Liang, Rui Liu, Fuxiao Liu, Jingxi Che, Dian Yu, Jordan Boyd-Graber, Haitao Mi, and Dong Yu.
\newblock Self-rewarding vision-language model via reasoning decomposition, 2026{\natexlab{b}}.
\newblock URL \url{https://arxiv.org/abs/2508.19652}.

\bibitem[Lin et~al.(2026)Lin, Tang, Ye, Huang, Zhang, Pang, Jin, Ning, Luo, and Yuan]{lin2024moellava}
Bin Lin, Zhenyu Tang, Yang Ye, Jinfa Huang, Junwu Zhang, Yatian Pang, Peng Jin, Munan Ning, Jiebo Luo, and Li~Yuan.
\newblock Moe-llava: Mixture of experts for large vision-language models.
\newblock \emph{{IEEE} Trans. Multim.}, 28:\penalty0 4408--4419, 2026.
\newblock \doi{10.1109/TMM.2026.3654458}.
\newblock URL \url{https://doi.org/10.1109/TMM.2026.3654458}.

\bibitem[Loshchilov \& Hutter(2019)Loshchilov and Hutter]{loshchilov2019adamw}
Ilya Loshchilov and Frank Hutter.
\newblock Decoupled weight decay regularization.
\newblock In \emph{7th International Conference on Learning Representations, {ICLR} 2019, New Orleans, LA, USA, May 6-9, 2019}. OpenReview.net, 2019.
\newblock URL \url{https://openreview.net/forum?id=Bkg6RiCqY7}.

\bibitem[Lu et~al.(2026)Lu, Zhang, Guo, Zhang, Pang, Liu, Li, Gu, Che, Guo, and Guo]{lu2026optd}
Xiaocheng Lu, Hualei Zhang, Shuhan Guo, Jie Zhang, Xiaoyi Pang, Jian Liu, Haoxi Li, Bohai Gu, Haoxuan Che, Jingcai Guo, and Song Guo.
\newblock {OPTD:} on-policy transition distillation with consistency-guided adaptive compression for few-step diffusion language models.
\newblock \emph{CoRR}, abs/2608.02942, 2026.
\newblock \doi{10.48550/ARXIV.2608.02942}.
\newblock URL \url{https://doi.org/10.48550/arXiv.2608.02942}.

\bibitem[Ma et~al.(2025)Ma, Zhang, Wang, and Ye]{ma2026maskgrpo}
Tianren Ma, Mu~Zhang, Yibing Wang, and Qixiang Ye.
\newblock Consolidating reinforcement learning for multimodal discrete diffusion models.
\newblock \emph{CoRR}, abs/2510.02880, 2025.
\newblock \doi{10.48550/ARXIV.2510.02880}.
\newblock URL \url{https://doi.org/10.48550/arXiv.2510.02880}.

\bibitem[Minixhofer et~al.(2025)Minixhofer, Vuli\'{c}, and Ponti]{minixhofer2025alm}
Benjamin Minixhofer, Ivan Vuli\'{c}, and Edoardo~Maria Ponti.
\newblock Universal cross-tokenizer distillation via approximate likelihood matching.
\newblock In D.~Belgrave, C.~Zhang, H.~Lin, R.~Pascanu, P.~Koniusz, M.~Ghassemi, and N.~Chen (eds.), \emph{Advances in Neural Information Processing Systems}, volume 38, Main Conference, pp.\  79297--79326. Curran Associates, Inc., 2025.
\newblock \doi{10.52202/085713-2653}.
\newblock URL \url{https://proceedings.neurips.cc/paper_files/paper/2025/file/720f9f5dc751eb56952ae4fee2398f73-Paper-Conference.pdf}.

\bibitem[Nie et~al.(2025)Nie, Zhu, You, Zhang, Ou, Hu, Zhou, Lin, Wen, and Li]{nie2025llada}
Shen Nie, Fengqi Zhu, Zebin You, Xiaolu Zhang, Jingyang Ou, Jun Hu, Jun Zhou, Yankai Lin, Ji{-}Rong Wen, and Chongxuan Li.
\newblock Large language diffusion models.
\newblock \emph{CoRR}, abs/2502.09992, 2025.
\newblock \doi{10.48550/ARXIV.2502.09992}.
\newblock URL \url{https://doi.org/10.48550/arXiv.2502.09992}.

\bibitem[Oba et~al.(2026)Oba, Furuta, and Okazaki]{oba2026dispo}
Daisuke Oba, Hiroki Furuta, and Naoaki Okazaki.
\newblock Diffusion-state policy optimization for masked diffusion language models.
\newblock \emph{CoRR}, abs/2602.06462, 2026.
\newblock \doi{10.48550/ARXIV.2602.06462}.
\newblock URL \url{https://doi.org/10.48550/arXiv.2602.06462}.

\bibitem[Qian et~al.(2026)Qian, Su, Hu, Zhang, Deng, Zhao, and Zhang]{qian2026d3llm}
Yu{-}Yang Qian, Junda Su, Lanxiang Hu, Peiyuan Zhang, Zhijie Deng, Peng Zhao, and Hao Zhang.
\newblock d3llm: Ultra-fast diffusion {LLM} using pseudo-trajectory distillation.
\newblock \emph{CoRR}, abs/2601.07568, 2026.
\newblock \doi{10.48550/ARXIV.2601.07568}.
\newblock URL \url{https://doi.org/10.48550/arXiv.2601.07568}.

\bibitem[Ren et~al.(2026)Ren, Huang, Yuan, Zhao, and Liu]{ren2026topd}
Haolin Ren, Ziyang Huang, Chenhao Yuan, Jun Zhao, and Kang Liu.
\newblock Trace-based on-policy distillation for masked diffusion language models.
\newblock \emph{CoRR}, abs/2607.16872, 2026.
\newblock \doi{10.48550/ARXIV.2607.16872}.
\newblock URL \url{https://doi.org/10.48550/arXiv.2607.16872}.

\bibitem[Su et~al.(2026)Su, Helwig, Parashar, Chagi, Jotsna, Zhi, Caverlee, Kalathil, and Ji]{su2026opdlm}
Xingyu Su, Jacob Helwig, Shubham Parashar, Atharv Chagi, Lakshmi Jotsna, Degui Zhi, James Caverlee, Dileep Kalathil, and Shuiwang Ji.
\newblock Data-efficient autoregressive-to-diffusion language models via on-policy distillation.
\newblock \emph{CoRR}, abs/2606.06712, 2026.
\newblock \doi{10.48550/ARXIV.2606.06712}.
\newblock URL \url{https://doi.org/10.48550/arXiv.2606.06712}.

\bibitem[Wang et~al.(2026)Wang, Yuan, Zhong, Zhang, Xiao, Sun, and Qi]{wang2026bpm}
Hao Wang, Kun Yuan, Wenlin Zhong, Minglei Zhang, Han Xiao, Ming Sun, and Honggang Qi.
\newblock Cross-tokenizer on-policy distillation via byte-prefix marginalization.
\newblock \emph{CoRR}, abs/2607.22334, 2026.
\newblock \doi{10.48550/ARXIV.2607.22334}.
\newblock URL \url{https://doi.org/10.48550/arXiv.2607.22334}.

\bibitem[Wang et~al.(2025)Wang, Yang, Li, Tian, Shen, and Wang]{wang2026tracerl}
Yinjie Wang, Ling Yang, Bowen Li, Ye~Tian, Ke~Shen, and Mengdi Wang.
\newblock Revolutionizing reinforcement learning framework for diffusion large language models.
\newblock \emph{CoRR}, abs/2509.06949, 2025.
\newblock \doi{10.48550/ARXIV.2509.06949}.
\newblock URL \url{https://doi.org/10.48550/arXiv.2509.06949}.

\bibitem[Wu et~al.(2025)Wu, Li, Zhou, Lin, Gao, Yan, Yin, Bai, Xu, Chen, Chen, Tang, Zhang, Wang, Yang, Yu, Cheng, Liu, Li, Zhang, Meng, Wei, Ni, Chen, Cao, Peng, Qu, Wu, Wang, Yu, Wen, Feng, Xu, Wang, Zhang, Zhu, Wu, Cai, and Liu]{wu2025qwenimage}
Chenfei Wu, Jiahao Li, Jingren Zhou, Junyang Lin, Kaiyuan Gao, Kun Yan, Shengming Yin, Shuai Bai, Xiao Xu, Yilei Chen, Yuxiang Chen, Zecheng Tang, Zekai Zhang, Zhengyi Wang, An~Yang, Bowen Yu, Chen Cheng, Dayiheng Liu, Deqing Li, Hang Zhang, Hao Meng, Hu~Wei, Jingyuan Ni, Kai Chen, Kuan Cao, Liang Peng, Lin Qu, Minggang Wu, Peng Wang, Shuting Yu, Tingkun Wen, Wensen Feng, Xiaoxiao Xu, Yi~Wang, Yichang Zhang, Yongqiang Zhu, Yujia Wu, Yuxuan Cai, and Zenan Liu.
\newblock Qwen-image technical report.
\newblock \emph{CoRR}, abs/2508.02324, 2025.
\newblock \doi{10.48550/ARXIV.2508.02324}.
\newblock URL \url{https://doi.org/10.48550/arXiv.2508.02324}.

\bibitem[Wu et~al.(2026)Wu, Lan, Fu, Gao, Wang, Yu, Alvarez, Molchanov, Luo, Han, Zhu, and Xie]{wu2026fastdvlm}
Chengyue Wu, Shiyi Lan, Yonggan Fu, Sensen Gao, Jin Wang, Jincheng Yu, Jose~M. Alvarez, Pavlo Molchanov, Ping Luo, Song Han, Ligeng Zhu, and Enze Xie.
\newblock Fast-dvlm: Efficient block-diffusion vlm via direct conversion from autoregressive vlm, 2026.
\newblock URL \url{https://arxiv.org/abs/2604.06832}.

\bibitem[Yang et~al.(2025)Yang, Tian, Li, Zhang, Shen, Tong, and Wang]{yang2025mmada}
Ling Yang, Ye~Tian, Bowen Li, Xinchen Zhang, Ke~Shen, Yunhai Tong, and Mengdi Wang.
\newblock Mmada: Multimodal large diffusion language models.
\newblock In Danielle Belgrave, Cheng Zhang, Laura~N. Montoya, Hsuan{-}Tien Lin, Razvan Pascanu, Piotr Koniusz, Marzyeh Ghassemi, Nancy Chen, Iv{\'{a}}n Vladimir~Meza Ru{\'{\i}}z, and Arturo Loaiza{-}Bonilla (eds.), \emph{Advances in Neural Information Processing Systems 38: Annual Conference on Neural Information Processing Systems 2025, NeurIPS 2025, San Diego, CA, USA, December 2-7, 2025 / Mexico City, Mexico, November 30 - December 5, 2025}, 2025.
\newblock URL \url{http://papers.nips.cc/paper\_files/paper/2025/hash/caa934a507a952698d54efb24845fc4b-Abstract-Conference.html}.

\bibitem[You et~al.(2025)You, Nie, Zhang, Hu, Zhou, Lu, Wen, and Li]{you2025lladav}
Zebin You, Shen Nie, Xiaolu Zhang, Jun Hu, Jun Zhou, Zhiwu Lu, Ji{-}Rong Wen, and Chongxuan Li.
\newblock Llada-v: Large language diffusion models with visual instruction tuning.
\newblock \emph{CoRR}, abs/2505.16933, 2025.
\newblock \doi{10.48550/ARXIV.2505.16933}.
\newblock URL \url{https://doi.org/10.48550/arXiv.2505.16933}.

\bibitem[Yu et~al.(2024)Yu, Lezama, Gundavarapu, Versari, Sohn, Minnen, Cheng, Gupta, Gu, Hauptmann, Gong, Yang, Essa, Ross, and Jiang]{yu2024magvitv2}
Lijun Yu, Jos{\'{e}} Lezama, Nitesh~Bharadwaj Gundavarapu, Luca Versari, Kihyuk Sohn, David Minnen, Yong Cheng, Agrim Gupta, Xiuye Gu, Alexander~G. Hauptmann, Boqing Gong, Ming{-}Hsuan Yang, Irfan Essa, David~A. Ross, and Lu~Jiang.
\newblock Language model beats diffusion - tokenizer is key to visual generation.
\newblock In \emph{The Twelfth International Conference on Learning Representations, {ICLR} 2024, Vienna, Austria, May 7-11, 2024}. OpenReview.net, 2024.
\newblock URL \url{https://openreview.net/forum?id=gzqrANCF4g}.

\bibitem[Zeng et~al.(2026)Zeng, Yao, Liao, Tao, Liu, and Wang]{zeng2025diffusionvl}
Lunbin Zeng, Jingfeng Yao, Bencheng Liao, Hongyuan Tao, Wenyu Liu, and Xinggang Wang.
\newblock Diffusionvl: Translating any autoregressive models into diffusion vision language models.
\newblock In Paolo Favaro, Zuzana Kukelova, Atsuto Maki, Anna Rohrbach, Konrad Schindler, and Federico Tombari (eds.), \emph{Computer Vision -- ECCV 2026}, pp.\  331--348, Cham, 2026. Springer Nature Switzerland.
\newblock ISBN 978-3-032-37369-4.

\bibitem[Zhang et~al.(2025)Zhang, Zhang, Liang, Meng, Chen, Xu, and Zhou]{zhang2025dskd}
Xue Zhang, Songming Zhang, Yunlong Liang, Fandong Meng, Yufeng Chen, Jinan Xu, and Jie Zhou.
\newblock A dual-space framework for general knowledge distillation of large language models, 2025.
\newblock URL \url{https://arxiv.org/abs/2504.11426}.

\bibitem[Zhu et~al.(2025)Zhu, You, Xing, Huang, Liu, Zhuang, Lu, Wang, Wang, Wei, Guo, Hu, Ye, Chen, Li, Tang, Feng, Hu, Zhou, Zhang, Lan, Zhao, Zheng, Li, Li, and Wen]{zhu2025lladamoe}
Fengqi Zhu, Zebin You, Yipeng Xing, Zenan Huang, Lin Liu, Yihong Zhuang, Guoshan Lu, Kangyu Wang, Xudong Wang, Lanning Wei, Hongrui Guo, Jiaqi Hu, Wentao Ye, Tieyuan Chen, Chenchen Li, Chengfu Tang, Haibo Feng, Jun Hu, Jun Zhou, Xiaolu Zhang, Zhenzhong Lan, Junbo Zhao, Da~Zheng, Chongxuan Li, Jianguo Li, and Ji{-}Rong Wen.
\newblock Llada-moe: {A} sparse moe diffusion language model.
\newblock \emph{CoRR}, abs/2509.24389, 2025.
\newblock \doi{10.48550/ARXIV.2509.24389}.
\newblock URL \url{https://doi.org/10.48550/arXiv.2509.24389}.

\bibitem[Zhu et~al.(2026)Zhu, Guo, Choi, Molodyk, Yuan, Tao, and Chen]{zhu2025dmpo}
Yuchen Zhu, Wei Guo, Jaemoo Choi, Petr Molodyk, Bo~Yuan, Molei Tao, and Yongxin Chen.
\newblock Enhancing reasoning for diffusion llms via distribution matching policy optimization, 2026.
\newblock URL \url{https://arxiv.org/abs/2510.08233}.

\end{thebibliography}
\bibliographystyle{iclr2027_conference}
\clearpage
\appendix
\begin{center}
    {\Large\bf APPENDIX}
\end{center}
\section{Experimental Settings and Reproducibility}
\label{app:experiments}

\subsection{Data, endpoints, and benchmark separation}

\subsubsection{Endpoint construction and benchmark separation}

\paragraph{Reference set and matching rules.}
To keep endpoint training separate from evaluation, we compare all 47,628
Vision-SR1 source rows with 22,849 reference rows from
MMMU validation, MMMU-Pro Standard-10, MMStar, MMBench DEV-EN v1.1,
POPE, MME, GQA, SEEDBench, MathVerse mini-vision, MathVista testmini, GPQA main
and diamond, and the validation and test splits of M3CoT and ScienceQA.  Exact
matching uses source and content identifiers, normalized question/options and
prompt/options, canonical decoded RGB image hashes, and image-question pairs.  Text
normalization removes output-format suffixes, applies Unicode NFKC and
lowercasing, and retains Unicode word tokens with normalized option labels.
Image hashes include image dimensions and decoded RGB pixels.

Near matching uses equal normalized core text, word 3-gram Jaccard similarity
at least 0.90 with two boundary tokens on each side, or a 64-bit DCT perceptual
hash with Hamming distance at most four.  Exact and near matches are excluded
before endpoint construction.  Exact rules identify 1,203 rows, and near rules
remove another 1,878, for 3,081 exclusions and 44,547 remaining training
examples.  The final endpoint manifest contains these 44,547 examples.  The
overlap-check bundle records reference split counts, row-level exclusion
reasons, source positions, and SHA-256 hashes in
\path{AUDIT_REPORT.json} and \path{EXCLUDED_ROWS.jsonl}.

\paragraph{Endpoint identity and adaptation.}
Every endpoint contains one final-answer marker, preserves the verified answer,
fits the corresponding student canvas, and excludes chat special tokens and
empty rationales.  The cross-backbone manifests share source examples, prompts,
images, teacher responses, verified answers, membership, and ordering.
Student-specific tokenization and deterministic canvas fitting determine the
token sequences while retaining each final-answer suffix.

\subsubsection{Cross-tokenizer endpoint adaptation}
\label{app:endpoint_adaptation}

We remove model-specific wrappers from each teacher response and serialize the
result as \texttt{Reason:} followed by \texttt{Final answer:}.  The extracted
final answer must match the released training answer.  When it does, we retain
the rationale under the canonical serialization; otherwise, one
answer-conditioned regeneration supplies a deterministic task-aware rationale
while preserving the verified answer.  The resulting endpoint text is frozen
and shared across student architectures.

Each student then tokenizes that text with its own tokenizer.  Let
$L_\student$ be the response-canvas capacity including one terminal EOS,
$B_\student=L_\student-1$, and $f$ the student-token suffix beginning at the
final \texttt{Final answer:} marker after trailing EOS tokens are removed.  For
DiMoE-VL, the capacity adapter verifies that $f$ is an exact suffix of $z$ and
$|f|\leq B_\student$, then
applies
\begin{equation}
C_{B_\student}(z;f)=
\begin{cases}
z, & |z|\le B_\student,\\
z_{1:B_\student-|f|}\Vert f, & |z|>B_\student,
\end{cases}
\qquad
Y^\teacher=C_{B_\student}(z;f)\Vert[\mathrm{EOS}]_\student.
\label{eq:endpoint_capacity}
\end{equation}
The adapter leaves a fitting DiMoE-VL retokenization unchanged.  If the sequence
is too long, it preserves the verified final-answer suffix and truncates only
the preceding rationale.  LLaDA-V and LaViDa-LLaDA use 128-token canonical
response canvases, shortening only the reason field when needed and preserving
the terminal answer.  MMaDA retokenizes the shared text onto its 512-token
understanding canvas; its image adapter uses frozen MAGVITv2
(\Cref{app:backbone_interfaces,app:mmada_joint_training}).  Each text endpoint
appends exactly one
architecture-native terminal token and excludes mask tokens and other forbidden
special tokens.

\subsection{Models, training, and evaluation protocols}

\subsubsection{Cross-backbone CT-OPD configuration}

To complete the final global batch, we repeat the first 13 endpoints in the
fixed training order, each contributing all four state exposures.

\begin{table}[H]
\caption{Shared \method{} configuration across diffusion-VLM backbones.}
\centering
\resizebox{\textwidth}{!}{%
\begin{tabular}{lccc}
\toprule
Setting & LLaDA-V & LaViDa-LLaDA & \model{} \\
\midrule
Training data & Vision-SR1-47K & Vision-SR1-47K & Vision-SR1-47K \\
Trajectory-mask ratios & $1,.75,.5,.25$ & $1,.75,.5,.25$ & $1,.75,.5,.25$ \\
Endpoint-only mask slots & $1,1,1,1$ & $1,1,1,1$ & $1,1,1,1$ \\
Global batch & 16 & 16 & 16 \\
Learning rate & $3\times10^{-7}$ & $3\times10^{-7}$ & $3\times10^{-7}$ \\
Response canvas & 128 & 128 & 128 \\
Native reverse domain & full-canvas masked & full-canvas masked & full-canvas masked \\
Reverse steps per active domain & 32 & 32 & 32 \\
Native block/commit width & 4 tokens & 4 tokens & 4 tokens \\
Supervision & all unresolved & all unresolved & all unresolved \\
Seed & 3407 & 3407 & 3407 \\
Precision & BF16 & BF16 & BF16 \\
\bottomrule
\end{tabular}
}
\end{table}

\paragraph{Random-mask supervision.}
Random uses the same teacher endpoints, initialization, optimizer, and update
schedule as CT-OPD. For each example and training cycle, the native rollout
provides the number of supervised positions at each stage. A seeded random
permutation of endpoint-valid positions defines nested masks of those sizes;
visible positions and targets are then populated from the same endpoint.
LLaDA-V, LaViDa-LLaDA, and \modelbase{} randomize within the response canvas.
MMaDA randomizes within the active decision block for understanding and across
the image-token canvas for generation, preserving the native state outside
that support. Each mask tuple supplies four stage-specific updates under the
backbone's native schedule. Thus, Random retains partial-state training and coherent endpoint
supervision while replacing the student's choice of unresolved positions.

\paragraph{OPDLM+BPM distribution distillation.}
\label{app:opdlm_bpm}
We combine OPDLM \citep{su2026opdlm} with Byte-Prefix Marginalization
\citep{wang2026bpm} for visual understanding with different teacher and student
tokenizers. The diffusion student generates its own reverse trajectory on an
image-question prompt. The frozen Qwen3-VL-32B-Instruct teacher reads the same
image and question and scores causal prefixes of the student's completed
response. BPM maps these predictions to student response positions and
vocabulary entries through their byte representations, accounting for unequal
token boundaries and retaining unmatched probability in a residual category.
The student minimizes forward KL at unresolved response positions in sampled
intermediate states, retaining the visible tokens from its own rollout.
This gives OPDLM's causal-prefix supervision a cross-tokenizer interface,
whereas CT-OPD reconstructs both visible context and targets from a completed
teacher response. \Cref{tab:ctopd_unified_main} reports OPDLM+BPM on LLaDA-V,
LaViDa-LLaDA, and \modelbase{} under the same nine-benchmark evaluation.

\paragraph{Trajectory-mask source and pairing.}
The LLaDA-V intervention crosses trajectory-mask source
(current-student traces refreshed once per four-update cycle, or one
initial-Base four-stage mask tuple collected per prompt before training and reused)
with prompt pairing (the originating prompt, or a deterministic cross-prompt
derangement).  All four cells use the same 44,547 teacher endpoints, seeded
endpoint order, $\{100,75,50,25\}\%$ stages, 178,240 state exposures, 11,140
optimizer steps, global batch 16, and learning rate $3\times10^{-7}$.
Derangement is performed within endpoint active length and transports the
donor reveal order to the recipient's stage cardinality; it preserves stage
labels, nested mask structure, and every recipient mask count.  Only the
four-stage boolean mask tuple is transferred, never donor token values.  Thus
the two factors directly test whether mask placement must adapt online and
remain prompt specific under identical endpoint supervision and optimization.

\paragraph{Direct Trace-Target reconstruction control.}
On LLaDA-V, Direct Trace-Target uses the same 44,547 teacher endpoints,
seeded endpoint order, 32-step current-student rollout,
$\{100,75,50,25\}\%$ trajectory-mask ratios, 178,240 scheduled state
exposures, 11,140 optimizer steps, global batch 16, and learning rate
$3\times10^{-7}$ as CT-OPD.
It leaves each collected raw state unchanged and applies the same endpoint-token
cross-entropy at target-valid raw masked positions.  CT-OPD instead replaces
raw visible values with their teacher-endpoint values and clears coordinates
outside the endpoint span.  The comparison holds the endpoint and training
schedule fixed and uses the same online mask-selection protocol; each arm's
traces evolve with its own current checkpoint.

\subsubsection{Sparse DiMoE-VL architecture and training}

\begin{table}[H]
    \caption{\model{} parameter accounting.  Active parameters include
    always-active modules, the runtime visual tower, and the top-8 experts in
    each routed layer.}
    \label{tab:param_accounting}
    \centering
    \begin{tabular}{lrr}
        \toprule
        Component & Total & Active \\
        \midrule
        Language model & 7.357B & 1.720B \\
        Projector and image newline & 0.007B & 0.007B \\
        Routing auxiliaries & 0.000173B & 0.000173B \\
        SigLIP2 visual tower & 1.107B & 1.107B \\
        \midrule
        \textbf{DiMoE-VL} & \textbf{8.471B} & \textbf{2.834B} \\
        \bottomrule
    \end{tabular}
\end{table}

\begin{table}[t]
\caption{\textbf{Three-stage training recipe for \model{}.}  Stage~II produces
\modelbase{}; Stage~III applies \method{} to the resulting native
masked-diffusion policy.  A dash marks a component unused in that stage.}
\label{tab:dimoe_three_stage_training}
\centering
\footnotesize
\setlength{\tabcolsep}{3.2pt}
\renewcommand{\arraystretch}{1.07}
\begin{tabularx}{\textwidth}{@{}l*{3}{>{\centering\arraybackslash}X}@{}}
\toprule
\textbf{Configuration} & \textbf{Stage I} & \textbf{Stage II} & \textbf{Stage III} \\
\midrule
Training objective
  & Vision-language alignment
  & Full multimodal SFT
  & Counterfactual trace on-policy distillation \\
Optimization target
  & Multimodal projector
  & Projector and LLaDA-MoE decoder
  & Endpoint-token likelihood on masks from student traces \\
Trainable modules
  & Projector only
  & Projector and LLaDA-MoE decoder
  & LLaDA-MoE decoder, sparse routers/experts, and LM head \\
\midrule
Training data
  & LLaVA-Pretrain
  & MAmmoTH-VL SI-10M + OV-2M
  & Vision-SR1-47K endpoints \\
Endpoint teacher
  & -
  & -
  & Qwen3-VL-32B-Instruct \\
Vision encoder
  & \multicolumn{3}{c}{SigLIP2-Giant-Patch16-384 (frozen)} \\
Language backbone
  & \multicolumn{3}{c}{LLaDA-MoE-7B-A1B-Instruct} \\
Projector
  & \multicolumn{3}{c}{MLP-2$\times$-GELU multimodal projector} \\
Training extent
  & One alignment pass
  & One multimodal SFT pass
  & One endpoint pass over four trajectory-mask ratios \\
Learning rate
  & $1\times10^{-3}$
  & $1\times10^{-5}$
  & $3\times10^{-7}$ \\
LR schedule
  & cosine
  & cosine
  & cosine \\
Warmup ratio
  & $3\%$
  & $3\%$
  & $3\%$ \\
Per-device batch size
  & 16
  & 1
  & 1 \\
Sequence layout
  & 8,192 tokens
  & 8,192-16,384 tokens
  & native prompt + 128-token response canvas \\
Reverse-state construction
  & -
  & native SFT corruption
  & 32-step student trace; 4-token commits \\
Trajectory-mask ratios
  & -
  & -
  & $\{100,75,50,25\}\%$ \\
Distributed optimization
  & ZeRO-2
  & ZeRO-3
  & ZeRO-3 \\
Precision
  & \multicolumn{3}{c}{BF16} \\
\bottomrule
\end{tabularx}
\end{table}

Stage~I trains only the multimodal projector.  Stage~II jointly tunes the
projector and LLaDA-MoE decoder, producing \modelbase{} at update 80,000.
Stage~III freezes the visual encoder, projector, and input embeddings.  One
current-student trajectory supplies four masks at the prescribed ratios, each
applied to the corresponding endpoint.  Mean endpoint-token negative
log-likelihood over all unresolved positions updates the decoder, sparse
routers and experts, and LM head.  Stage~III optimizes this endpoint-token
objective directly.

\subsubsection{MMaDA training protocol}
\label{app:mmada_joint_training}

\paragraph{Prompt source and teacher images.}
Image-training prompts come from MaskGRPO's released GenEval training metadata
\citep{ma2026maskgrpo}, stored in
\path{dataset/GenEval/train_metadata_filtered.jsonl}.  The source contains
17,690 entries and 12,158 distinct prompt texts after stripping surrounding
whitespace.  A deterministic SHA-256 ordering with seed 3408 selects 8,000
entries, covering 5,629 distinct prompt texts.  With seed 3407, the understanding
branch selects 8,000 answer-anchored examples from the overlap-filtered
Vision-SR1 pool.
Each selected image prompt receives a Qwen-Image-2512 endpoint at
$512\times512$ resolution, using 50 inference steps, true CFG 4.0, and
a sample-specific seed derived from base seed 3407 and its sample ID.
Frozen MAGVITv2 maps each endpoint to 1,024 image tokens from an
8,192-entry codebook.

\paragraph{Joint update schedule.}
The CT-OPD run performs 8,000 updates with one shared AdamW optimizer.  It
alternates understanding and generation in a fixed $1{:}1$ ratio, beginning
with understanding, so each branch receives 4,000 updates.  Eight ranks process
one example each per update.  Every selected endpoint appears at four
trajectory-mask ratios, yielding 32,000 example-state exposures per branch.
We use learning rate $3\times10^{-7}$, zero weight decay, $3\%$ warmup,
cosine decay, and gradient clipping at 1.0.  Within each branch, Endpoint-only
uses the same endpoint IDs and four-slot exposure schedule, with a fully masked
($100\%$) canvas in every slot. Random follows the same joint schedule and
matches the supervised-position count at each stage in both branches.
The training manifests record endpoint
membership and schedule hashes.

\paragraph{Generation evaluation.}
The evaluation metadata contain 553 prompts, with four samples per prompt
(2,212 images), 50 reverse steps, and guidance scale 3.5.  Stripped prompt
texts in the selected training and evaluation sets are disjoint under exact
text matching.  GenEval overall is the arithmetic mean of its six category
scores; CLIP and ImageReward average over the same 2,212 generated images.

\subsubsection{RL-specific configurations}
\label{app:rl_configuration}

Within each backbone, MDPO and MaskGRPO start from the same pretrained
checkpoint, train the same modules as CT-OPD, and retain the native multimodal
prompt, prediction vocabulary, and response canvas.  Run length is set per
method so aggregate student compute matches CT-OPD within each backbone;
\Cref{tab:rl_configuration} lists its algorithm-specific settings.

\begin{table}[H]
\centering
\caption{\textbf{RL-specific training settings.}  MDPO and MaskGRPO use
128-token understanding canvases.  UniGRPO jointly trains MMaDA understanding
and image generation; paired entries denote text and image settings.}
\label{tab:rl_configuration}
\footnotesize
\setlength{\tabcolsep}{4pt}
\renewcommand{\arraystretch}{1.08}
\begin{tabularx}{\textwidth}{@{}l*{3}{>{\centering\arraybackslash}X}@{}}
\toprule
Setting & MDPO & MaskGRPO & UniGRPO \\
\midrule
Optimizer & AdamW & AdamW & AdamW \\
Learning rate & $3\times10^{-7}$ & $3\times10^{-7}$ & $3\times10^{-7}$ \\
Weight decay & 0 & 0 & 0 \\
LR schedule & constant & constant & cosine \\
Gradient clipping & 1.0 & 1.0 & 1.0 \\
Candidates per prompt & 4 & 4 & 4 \\
Reverse steps & 32 & 32 & 256 / 16 \\
Response tokens & 128 & 128 & 512 / 1,024 \\
Sampling temperature & 0.1 & 0.9 & 0.9 / 1.0 \\
Surrogate clipping $\epsilon$ & 0.2 & 0.2 & 0.2 \\
KL coefficient $\beta$ & 0 & 0 & 0.01 \\
Training-state selection & 4 advantage-selected states & 12 mask levels & 4 mask levels \\
Minimum replay mask rate & - & 0.5 & 0.5 \\
Seed & 3407 & 3407 & 3407 \\
Precision & BF16 & BF16 & BF16 \\
\bottomrule
\end{tabularx}
\end{table}

\paragraph{MDPO.}
Four candidate trajectories provide intermediate full-sequence predictions.
Each prediction is scored against the verified final answer; reward differences
and future reward averages then define the intermediate advantages.  After
per-prompt, per-step group normalization, MDPO selects four states by absolute
advantage mass and applies its confidence-weighted clipped objective.  Its
rollout reveals high-confidence tokens first, leaving lower-confidence
positions unresolved.

\paragraph{MaskGRPO.}
Four completed responses receive terminal answer-correctness rewards followed
by per-prompt group normalization.  The released \texttt{uni} schedule cycles
through 12 replay mask rates from $0.5+0.5/12$ to $0.999$, with random
remasking constructing the state for the tokenwise clipped objective.

\paragraph{UniGRPO.}
Understanding and generation alternate through one shared model with
$3\%$ warmup and cosine learning-rate decay.  Text responses receive
answer-correctness rewards, while image candidates receive an
equal-weight combination of group-standardized CLIP and ImageReward scores.
Each four-candidate group is reused over four branch updates with replay mask
rates $0.625$, $0.75$, $0.875$, and $0.999$.  The clipped surrogate includes a
$\beta=0.01$ K3 penalty to the policy frozen at group collection.  Image
rollouts use classifier-free guidance 3.0.

\paragraph{Answer scoring and implementation.}
Understanding prompts request a boxed final answer.  The reward parser also
accepts a concise option letter or numerical answer and compares the extracted
answer with the verified target.  Our implementation follows the MDPO and
MaskGRPO releases pinned at commits \texttt{e6ae439} and \texttt{bc09711} for
candidate-state reward alignment, direct-answer extraction, group
normalization, and remasking; tensor-level comparisons reproduce their losses
and gradients.  Each dense and sparse campaign predesignates one final
checkpoint, which supplies all scores within its model row.

\section{Supplemental Results and Visualizations}
\label{app:supplemental_results}

\subsection{Unified-model generation results and matched visualizations}

\begin{table}[H]
\centering
\caption{\textbf{CT-OPD improves every reported image-generation metric over
the frozen MMaDA-8B base.}  GenEval reports the overall score and its six
compositional categories.  Parenthesized values are absolute changes over
MMaDA-8B-MixCoT.  Bold marks the best result for each metric.}
\label{tab:mmada_joint_generation}
\setlength{\tabcolsep}{3.0pt}
\renewcommand{\arraystretch}{1.12}
\arrayrulecolor{tableline}
\scriptsize
\resizebox{\textwidth}{!}{%
\begin{tabular}{@{}lrrrrrrrrr@{}}
\toprule
\multirow{2}{*}{\textbf{Method}}
& \multicolumn{7}{c}{\textbf{GenEval}}
& \multirow{2}{*}{\textbf{CLIP score}}
& \multirow{2}{*}{\textbf{ImageReward}} \\
\cmidrule(lr){2-8}
& Overall & Single & Two & Count & Colors & Position & Color attr. & & \\
\midrule
\rowcolor{tablebase}
MMaDA-8B-MixCoT
& 55.45 & 91.56 & 67.68
& 42.19 & 80.05 & 19.25
& 32.00 & 28.43 & 0.585 \\
\rowcolor{tablebase}
\quad + Endpoint-only Distillation
& \ctgain{60.70}{+5.25}
& \ctgain{96.25}{+4.69}
& \ctgain{74.49}{+6.81}
& \ctgain{42.81}{+0.62}
& \ctbestgain{81.65}{+1.60}
& \ctgain{23.50}{+4.25}
& \ctgain{45.50}{+13.50}
& \ctgain{28.99}{+0.56}
& \ctgain{0.732}{+0.147} \\
\rowcolor{tablebase}
\quad + UniGRPO
& \ctgain{59.90}{+4.44}
& \ctgain{95.31}{+3.75}
& \ctgain{76.77}{+9.09}
& \ctbestgain{43.44}{+1.25}
& \ctgain{80.85}{+0.80}
& \ctbestgain{24.50}{+5.25}
& \ctgain{38.50}{+6.50}
& \ctgain{28.81}{+0.38}
& \ctgain{0.752}{+0.167} \\
\rowcolor{tablebase}
\quad + Random
& \ctgain{60.06}{+4.60}
& \ctgain{95.00}{+3.44}
& \ctgain{76.01}{+8.33}
& \ctloss{40.00}{$-$2.19}
& \ctgain{80.59}{+0.53}
& \ctgain{22.75}{+3.50}
& \ctgain{46.00}{+14.00}
& \ctgain{29.01}{+0.58}
& \ctgain{0.765}{+0.180} \\
\rowcolor{tableours}
\quad\textbf{+ \method{}}
& \ctbestgain{61.69}{+6.23}
& \ctbestgain{96.88}{+5.31}
& \ctbestgain{77.53}{+9.85}
& \ctgain{42.81}{+0.63}
& \ctbestgain{81.65}{+1.60}
& \ctgain{24.25}{+5.00}
& \ctbestgain{47.00}{+15.00}
& \ctbestgain{29.16}{+0.73}
& \ctbestgain{0.774}{+0.189} \\
\bottomrule
\end{tabular}%
}
\arrayrulecolor{black}
\end{table}

The matched examples in \cref{fig:mmada_generation_qualitative} illustrate the
composition categories summarized in \cref{tab:mmada_joint_generation}.  With
the prompt, sample index, and sampling seed fixed, they compare entity
separation, attribute binding, cardinality, and spatial relations across the
four methods.

\begin{figure}[H]
    \centering
    \includegraphics[width=\textwidth]{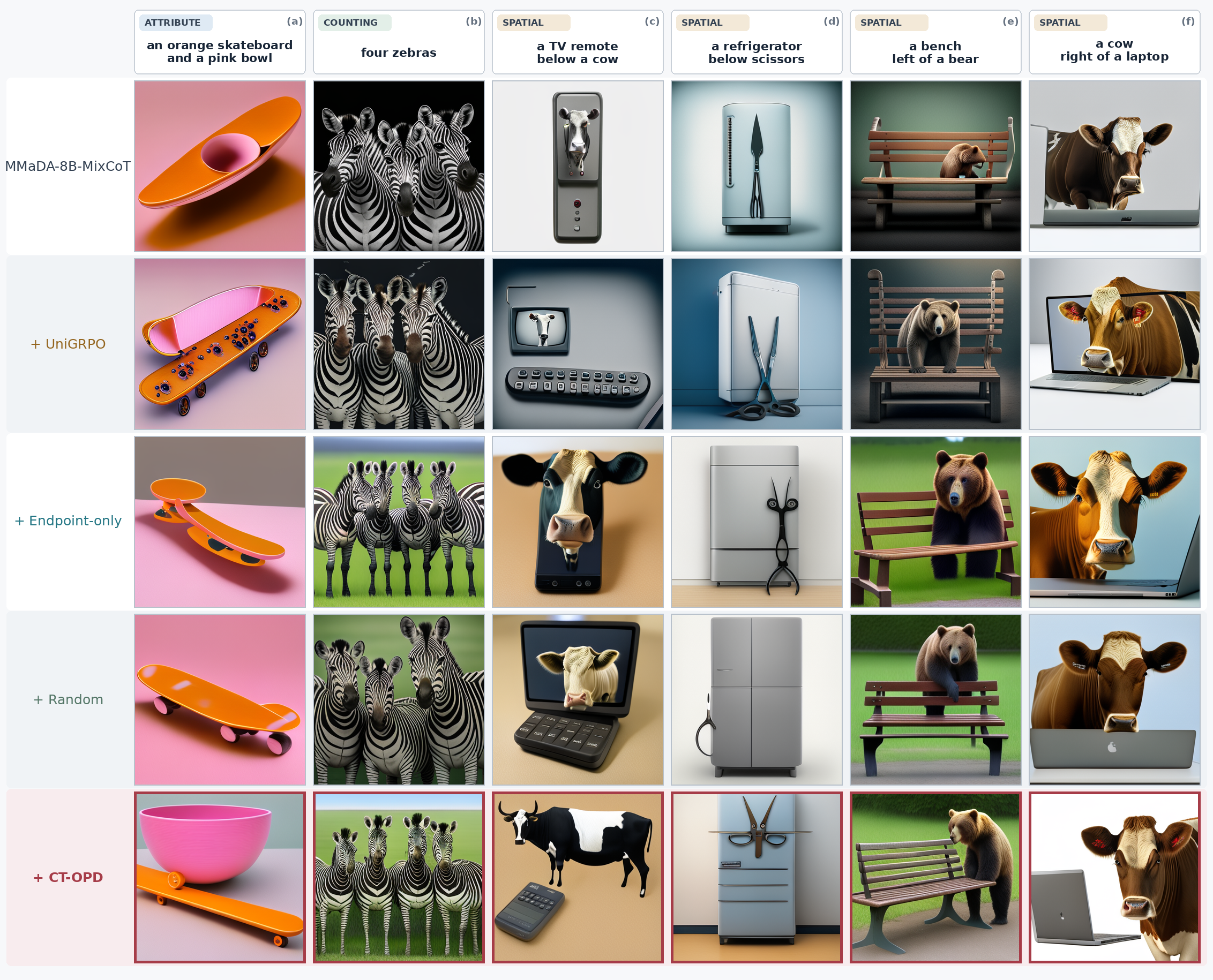}
    \caption{\textbf{Matched qualitative comparison on compositional image
    generation.}  Each column uses the same GenEval prompt, sample index, and
    sampling seed for MMaDA-8B-MixCoT, UniGRPO, Endpoint-only Distillation, and
    \method{}.  The examples display differences in attribute binding, object
    separation, cardinality, and relative placement.}
    \label{fig:mmada_generation_qualitative}
\end{figure}

\subsection{Additional prediction, routing, and comparison diagnostics}

\IfFileExists{generated/DIMOE_ROUTER_RESULTS_COMPLETE.tex}{
\subsubsection{Sparse routing after post-training}
\paragraph{Expert utilization and route overlap.}
\label{app:dimoe_router_integrity}
\begin{table}[H]
\centering
\caption{Paired DiMoE-VL routing behavior on identical base-model denoising states. Route statistics are aggregated over response tokens; paired changes compare \modelbase{} with the final CT-OPD checkpoint on the same states.}
\label{tab:dimoe_router_integrity}
\small
\begin{tabular}{lrr}
\toprule
Response-token route statistic & Base & +CT-OPD \\
\midrule
Normalized gate entropy & 0.964 & 0.965 \\
Soft effective experts (of 64) & 63.9 & 63.9 \\
Top-8 occupancy effective experts & 62.7 & 62.9 \\
Active expert fraction & 100.0\% & 100.0\% \\
Top-8 occupancy CV & 0.203 & 0.189 \\
\midrule
Paired base-to-final drift & \multicolumn{2}{c}{Value} \\
Gate Jensen-Shannon divergence & \multicolumn{2}{c}{0.0078} \\
Top-1 route agreement & \multicolumn{2}{c}{60.9\%} \\
Top-8 overlap & \multicolumn{2}{c}{72.7\%} \\
Exact top-8 set-change rate & \multicolumn{2}{c}{79.0\%} \\
\bottomrule
\end{tabular}
\end{table}

All 64 experts are active at both checkpoints.  On identical Base-model
denoising states, normalized gate entropy changes from $0.964$ to $0.965$, and
the coefficient of variation in top-8 occupancy changes from $0.203$ to
$0.189$.  Top-1 route agreement is $60.9\%$, while top-8 overlap is $72.7\%$.
The paired panel shows state-dependent routing while expert utilization remains
broad throughout the pool.
}{}

\IfFileExists{generated/CT_OPD_SIGNAL_COVERAGE_COMPLETE.tex}{
    \subsubsection{Signal coverage with equal terminal rewards}
    \label{app:signal_coverage}
\begin{table}[H]
\centering
\caption{Token supervision across denoising stages.  Coverage is measured on
common LLaDA-V evaluation states reconstructed from frozen Base trajectory
masks and teacher endpoints.  Zero-contrast prompts have identical correctness
rewards across four sampled completions.  Nonempty-state coverage is the
fraction of states with at least one supervised token; token coverage is the
fraction of endpoint tokens supervised at each stage.  Aggregate values pool
all four stages.}
\label{tab:ct_signal_coverage}
\small
\begin{tabular}{lrrrr}
\toprule
& \multicolumn{2}{c}{All prompts ($n=512$)} & \multicolumn{2}{c}{Zero-contrast prompts ($n=311$)} \\
\cmidrule(lr){2-3}\cmidrule(lr){4-5}
Mask ratio & Nonempty states (\%) & Token coverage (\%) & Nonempty states (\%) & Token coverage (\%) \\
\midrule
100\% & 100.00 & 100.00 & 100.00 & 100.00 \\
75\% & 100.00 & 73.08 & 100.00 & 73.83 \\
50\% & 86.91 & 45.02 & 90.68 & 47.22 \\
25\% & 70.51 & 21.13 & 75.24 & 22.11 \\
\midrule
Aggregate & 89.36 & 59.81 & 91.48 & 60.79 \\
\bottomrule
\end{tabular}
\end{table}

}{}

\subsubsection{Conditional prediction with zero terminal contrast}
\label{app:conditional_prediction_probe}

\paragraph{Zero-terminal-contrast panel.}
When group-relative terminal reward has no centered task contrast, this panel
measures conditional token learning across the final LLaDA-V checkpoints on
512 frozen Vision-SR1 training prompts.  Four Base screening completions per
prompt, sampled with seed 3407, define 292 all-wrong, 201 mixed-correctness, and
19 all-correct groups.  These fixed memberships define a common comparison of Base,
Endpoint-only, Random, Direct Trace-Target, and CT-OPD on common
conditional predictions.  Token evaluation uses the 292 all-wrong prompts.
Endpoint-only is the exposure-matched checkpoint trained
with four $100\%$-mask updates per cycle.

\paragraph{Paired common-state evaluation.}
A frozen Base trace, generated by deterministic low-confidence decoding at
temperature zero, supplies trajectory masks at ratios
$\{1.0,0.75,0.5,0.25\}$.  Teacher reconstruction gives every evaluated model
the same 2,048 state identities and target positions, isolating checkpoint
prediction on common inputs.  \Cref{tab:ct_signal_coverage} measures supervision
coverage on these states for all 512 prompts and the 311 zero-contrast prompts
(292 all-wrong and 19 all-correct).  The latter group contains 1,138 nonempty
states out of 1,244 and 63,087 supervised endpoint-token positions out of
103,780 across the four stages.  Token metrics aggregate the partial ratios
$\{0.75,0.5,0.25\}$: 876 states from the all-wrong
prompts, of which 784 have nonempty supervision, comprising 35,935 supervised
tokens.  We weight NLL and token accuracy by supervised token count.  For each
paired percentile 95\% interval, we resample prompt identities 10,000 times
with bootstrap seed 3407 and retain each prompt's partial states jointly across
models.  Every replicate recomputes token-weighted NLL and accuracy.  We apply
the same paired estimator separately at each mask ratio in the lower block of
\Cref{tab:zero_contrast_learning}.

On the all-wrong common-state probe, CT-OPD has lower NLL and higher token
accuracy than each of the four comparators
(\Cref{tab:zero_contrast_learning}).  Its NLL is 3.3661, compared with 3.5736
for Endpoint-only and 3.5202 for Random; its token accuracy is 27.43\%,
compared with 24.15\% and 26.12\%, respectively.  The prompt-paired intervals
quantify these differences on identical teacher-reconstructed partial states.
At 75\%, 50\%, and 25\% masking, the NLL and accuracy intervals favor CT-OPD;
all six exclude zero.

\paragraph{Canonical reconstruction versus raw-state targeting.}
The Direct/CT pair shares teacher endpoints, the online mask-selection
protocol, optimizer, update count, and state exposure; canonical reconstruction
is the controlled difference.  Each arm refreshes traces from its own current
checkpoint.  Direct Trace-Target retains the student's visible values, whereas
CT-OPD replaces them with the corresponding endpoint values.  We score both
final checkpoints on the same frozen teacher-reconstructed panel.  Across
all 512 prompts and the three partial ratios, NLL is 4.1664 for Direct
Trace-Target and 3.3392 for CT-OPD, while token accuracy is 13.59\% and 27.65\%.
The paired CT-minus-Direct differences are $-0.8272$ nats/token (95\% CI
$[-0.8717,-0.7847]$) and $+14.06$ points (95\% CI $[13.13,15.07]$).

\begin{table}[H]
\centering
\caption{\textbf{Canonical reconstruction and partial-state prediction.}
Direct Trace-Target preserves raw student-visible values while matching
CT-OPD's endpoints, online mask-selection protocol, optimizer, and exposure.
Entries report CT-OPD minus Direct Trace-Target with prompt-paired 95\% CIs.}
\label{tab:direct_trace_target_probe}
\arrayrulecolor{tableline}
\small
\setlength{\tabcolsep}{8pt}
\renewcommand{\arraystretch}{1.12}
\begin{tabular}{lcc}
\toprule
Mask & $\Delta$NLL (nats/token) $\downarrow$ & $\Delta$accuracy (pp) $\uparrow$ \\
\midrule
100\% & $+0.0384\;[0.0299,0.0469]$ & $-0.24\;[-0.44,-0.03]$ \\
75\%  & $-0.5086\;[-0.5531,-0.4657]$ & $+7.78\;[6.90,8.72]$ \\
50\%  & $-0.9615\;[-1.0334,-0.8925]$ & $+16.41\;[14.81,18.14]$ \\
25\%  & $-1.6434\;[-1.7449,-1.5460]$ & $+30.79\;[28.31,33.35]$ \\
\midrule
Partial aggregate & $-0.8272\;[-0.8717,-0.7847]$ & $+14.06\;[13.13,15.07]$ \\
\bottomrule
\end{tabular}
\arrayrulecolor{black}
\end{table}

The paired difference first favors CT-OPD when visible response context appears
and grows as the state becomes more resolved.  On the separate nine-benchmark
evaluation, Direct Trace-Target has average 45.99 and CT-OPD has 49.79.
Together, the common-state probe and benchmark average show the benefit of
pairing the student's mask geometry with endpoint-consistent visible values
before applying endpoint targets.

\IfFileExists{generated/LOCAL_JUDGE_RESULTS.tex}{
    \subsubsection{Paired semantic evaluation}
    \input{generated/LOCAL_JUDGE_RESULTS.tex}
}{}

\IfFileExists{generated/MULTISCALE_RESULTS.tex}{
    \subsubsection{Four-stratum training dynamics}
    \input{generated/MULTISCALE_RESULTS.tex}
}{}

\IfFileExists{generated/CROSS_ARCHITECTURE_ACC_LOOSE_V2_COMPLETE.tex}{
    \IfFileExists{generated/CROSS_ARCHITECTURE_ACC_LOOSE_V2_PUBLIC_COMPLETE.tex}{
        \subsubsection{Public absolute positioning}
        The following comparison places final model predictions alongside a
        fixed inventory of public diffusion-VLM scores.
        \input{generated/CROSS_ARCHITECTURE_ACC_LOOSE_V2_PUBLIC_COMPARISON.tex}
        \input{generated/CROSS_ARCHITECTURE_ACC_LOOSE_V2_PUBLIC_PARAGRAPH.tex}
    }{}
}{}

\section{Method and Native-Objective Details}
\label{app:method}

\subsection{Canonical reconstruction and exact updates}

\subsubsection{Complete update cycle}
\label{app:algorithm}

Each cycle begins by adapting one AR VLM response into the student endpoint
$Y^\teacher$.  CT-OPD runs the current diffusion student once, records its raw
reverse trace, and applies four architecture-native ordinal selectors.  At
stratum $j$, it constructs
$\xi_j^{(a)}=P_j^{(a)}(Y^\teacher,G_j^{(a)}(Z^\student),c)$ and the corresponding
state $F^{(a)}(Y^\teacher,\xi_j^{(a)},c)$.  Four sequential updates apply the
native denoising log score to unresolved active targets, after which the next
cycle refreshes the rollout policy.  Trajectory-variable extraction is performed
under stop-gradient; each update differentiates only the native loss evaluated
on the reconstructed input.

\subsubsection{Exact empirical update operator}

We materialize the endpoint set once from Vision-SR1-47K as
$\mathcal E=\{(c_e,y_e,L_e)\}_{e=1}^{|\mathcal E|}$.  A rationale is retained
when its final answer is verified; otherwise, one answer-anchored regeneration
constructs a coherent endpoint around the verified answer.  Examples follow a
deterministic seeded order, and each trace supplies the four ordered
trajectory-derived strata used by the objective.

At cycle $k$, set $\bar\theta_k=\theta_{k,0}$ and generate one deterministic,
stop-gradient raw trace $Z_{e,k}$ per endpoint.  For $j=0,\ldots,3$, each rank
computes a per-example mean over that example's unresolved endpoint tokens;
the distributed scalar is the mean of the 16 per-example losses,
\begin{equation}
\widehat\cL_{k,j}^{(a)}(\theta;\bar\theta_k)
=\frac1{16}\sum_{e\in\mathcal B_k}
\mathbf 1\{|U_{e,k,j}^{(a)}|>0\}
\frac{1}{|U_{e,k,j}^{(a)}|\vee1}
\sum_{u\in U_{e,k,j}^{(a)}}
-\log p_{\theta,u}^{(a)}(y_{e,u}\mid X_{e,k,j}^{(a)}).
\label{eq:exact_empirical_operator}
\end{equation}
An example with an empty supervised set contributes a graph-connected zero and
retains its batch slot.  The four optimizer calls occur sequentially at
$\theta_{k,0},\ldots,\theta_{k,3}$ while sharing $Z_{e,k}$, so each stratum is
evaluated with the latest parameters.  AdamW uses learning rate
$3\times10^{-7}$, zero weight decay, 3\% warmup, cosine decay, and global
gradient norm 1.0.
For the exposure-matched Endpoint-only control, the same endpoint minibatch
receives the same four sequential optimizer calls with
$U_{e,k,j}^{(a)}=A(y_e)$ for every $j\in\{0,1,2,3\}$.  Its four mask slots are
therefore $\{100,100,100,100\}\%$.

\subsection{Backbone interfaces and native objectives}

\subsubsection{Backbone interfaces}
\label{app:backbone_interfaces}

CT-OPD maps each teacher endpoint into the student's prediction space and
reconstructs intermediate states under its native decoding structure.  Text
endpoints are retokenized for LLaDA-V, LaViDa-LLaDA, \model{}, and MMaDA,
while image endpoints are encoded into MMaDA's discrete image-token space.
\Cref{tab:backbone_interfaces} connects these adapters to their prediction
supports, training traces, and downstream results.  Each branch minimizes
mean cross-entropy over unresolved endpoint-valid positions.

\paragraph{Categorical prediction supports.}
The training probabilities are normalized over the following alphabets:
\begin{align}
\mathcal V_L
&=\{0,\ldots,126348\}\setminus
\{126080,126081,126336,126346,126347\},\\
\mathcal V_D
&=\{0,\ldots,157152\}\setminus
\{156891,156893,156895\},\\
\mathcal V_M
&=\{0,\ldots,134655\}.
\end{align}
LLaDA-V and LaViDa-LLaDA use $\mathcal V_L$, and \model{} uses
$\mathcal V_D$.  Both MMaDA branches apply cross-entropy over the complete
output vocabulary $\mathcal V_M$.  Image targets are the 8,192 MAGVITv2
codebook entries mapped to their model token IDs; generation sampling selects
among those image entries.  The prediction support of each training objective
stays fixed across denoising stages.

\begin{table}[H]
\centering
\caption{Backbone interfaces and native training traces.  Prediction supports
specify the normalization domain of the training loss; trace settings describe
the student rollouts used to select intermediate states.}
\label{tab:backbone_interfaces}
\small
\begin{tabularx}{\textwidth}{@{}l>{\raggedright\arraybackslash}X>{\raggedright\arraybackslash}Xll@{}}
\toprule
Backbone / branch & Endpoint adapter & Training trace & Support & Results \\
\midrule
LLaDA-V & Native text tokenizer & 128-token canvas; 32 steps & $\mathcal V_L$ & \Cref{tab:ctopd_unified_main} \\
LaViDa-LLaDA & Native text tokenizer & 128-token canvas; 32 steps & $\mathcal V_L$ & \Cref{tab:ctopd_unified_main} \\
\model{} & Native text tokenizer; answer-preserving fitting & 128-token canvas; 32 steps & $\mathcal V_D$ & \Cref{tab:ctopd_unified_main} \\
MMaDA / understanding & Native text tokenizer & 512-token canvas; 128-token blocks; 256 steps & $\mathcal V_M$ & \Cref{tab:ctopd_unified_main} \\
MMaDA / generation & Frozen MAGVITv2 & 1,024-token image canvas; 16 steps & $\mathcal V_M$ & \Cref{tab:mmada_joint_generation} \\
\bottomrule
\end{tabularx}
\end{table}

For MMaDA understanding, masks are selected within the first active response
block.  For generation, they span the image-token canvas.  The two branches
share the joint update schedule in \Cref{app:mmada_joint_training}; generation
evaluation uses 50 reverse steps and guidance scale 3.5.

\subsection{Structural diffusion specializations}

\subsubsection{Block-diffusion formulation}
\label{app:bd3_validity}

For block-diffusion decoders, CT-OPD reconstructs each active block from the
teacher endpoint while preserving the decoder's block-causal conditioning.
The endpoint-level objective aggregates supervision over unresolved positions
in all endpoint-overlapping blocks.  For the BD3 factorization, let $P$ be
the response offset, $B$ the block width,
$I_b$ the coordinates in block $b$, and
\begin{equation}
\mathcal B(Y,c)=
\left\{\left\lfloor\frac{P}{B}\right\rfloor,\ldots,
\left\lfloor\frac{P+|A(Y)|-1}{B}\right\rfloor\right\}.
\label{eq:bd3_endpoint_blocks}
\end{equation}
For stratum $j$, the block-local trajectory descriptor is
$\Xi_j^{\mathrm{BD3}}=(g_{b,j},r_{b,j}^{\mathrm{act}})_{b}$.  Each component
comes from a visited block-local trace under the frozen fixed-canvas,
no-early-EOS trajectory-recording policy.  One BD3 forward evaluates the
resulting factorized bundle.  Block $b$ contributes trajectory mask $g_{b,j}$ and
$U_{b,j}=A(Y)\cap I_b\cap\{i:g_{b,j,i}=1\}$.  Let
$\cC_b(Y,g_{b,j})$ denote the block-local analogue of
\Cref{eq:active_transport}: teacher tokens occupy resolved coordinates of
$I_b$, while unresolved coordinates are masked.  With
$N_j(Y)=\sum_{b\in\mathcal B(Y,c)}|U_{b,j}|$, the endpoint-wide score is
\begin{equation}
\cL_{j}^{\mathrm{BD3}}
=\E\!\left[
\mathbf 1\{N_j(Y)>0\}\frac{
\sum_{b\in\mathcal B(Y,c)}\sum_{i\in U_{b,j}}
    -\log p^{\mathrm{BD3}}_{\theta,i}\!\left(
Y_i^\teacher\mid c,Y_{I_{<b}}^\teacher,
\cC_b(Y^\teacher,g_{b,j})\right)}
{N_j(Y)\vee1}
\right].
\label{eq:bd3_multiblock_objective}
\end{equation}
Here $p^{\mathrm{BD3}}_{\theta,i}$ is the full-vocabulary softmax of the
block decoder.  The objective scores every endpoint-overlapping block in the released
offset-block-causal domain; an empty effective stratum contributes a
graph-connected zero.  CT-OPD retains the released conditional factorization
and uses the endpoint-token mean score in
\Cref{eq:bd3_multiblock_objective} instead of BD3 pretraining's $1/t_b$
weighting.

\subsubsection{General diffusion specializations}

\Cref{tab:specializations} instantiates the endpoint adapter
$A_{\teacher\to\student}$, corruption map $F_t$, native target map $\tau_t$,
and discrepancy $D_t$ for four diffusion families.  In each case, CT-OPD fixes
a current-student raw trace, extracts an endpoint-valid architecture-specific
trajectory variable, reconstructs the state-target pair at the teacher
endpoint, and evaluates the generator's native per-state loss.  For an
endpoint-universal corruption map, the selected trajectory variable itself
remains fixed.
Each row assumes a specified common-noise realization and an extractor that
reconstructs the source state.

\begin{table*}[h]
\caption{CT-OPD specializations under the unified corruption map.}
\label{tab:specializations}
\centering
\resizebox{\textwidth}{!}{%
\begin{tabular}{lllll}
\toprule
Family & Endpoint adapter & Trajectory variable $\xi_t$ & Transported state & Conditional score \\
\midrule
Absorbing masked & Detokenize-retokenize & Unresolved-position mask $g_t$ & $\cC(Y^\teacher,g_t)$ & Unresolved-token NLL \\
Categorical & Domain-specific endpoint map & Base noise and scheduler randomness & $F_t(Y^\teacher,\xi_t)$ & Denoising cross-entropy \\
Gaussian/latent & Frozen endpoint encoder & Trace residual $\epsilon_t^\student$ & $\alpha_tY^\teacher+\sigma_t\epsilon_t^\student$ & $x_0$, $\epsilon$, or $v$ regression \\
Block diffusion & Domain-specific endpoint map & Block/reveal/remasking variables & $F_t(Y^\teacher,\xi_t)$ & Active-block NLL \\
\bottomrule
\end{tabular}
}
\end{table*}

For Gaussian diffusion with $\sigma_t>0$, the trajectory residual may be stored
during sampling or reconstructed relative to the student's current clean
estimate:
\begin{equation}
    \epsilon_t^\student
    =\frac{S_t^\student-\alpha_t\hat Y_t^\student}{\sigma_t},
    \qquad
    \widetilde S_t
    =\alpha_tY^\teacher+\sigma_t\epsilon_t^\student.
    \label{eq:gaussian_transport}
\end{equation}
Together with $\hat Y_t^\student$, this endpoint-relative residual exactly
reconstructs the observed student state.  CT-OPD preserves the residual
coordinate while replacing the endpoint component of the transported state;
the reverse-process occupancy induces the residual distribution.  Under the
common variance-preserving parameterization, the corresponding native targets
are
\begin{equation}
    \tau_{x_0}=Y^\teacher,
    \qquad
    \tau_{\epsilon}=\epsilon_t^\student,
    \qquad
    \tau_v=\alpha_t\epsilon_t^\student-\sigma_tY^\teacher.
    \label{eq:gaussian_native_targets}
\end{equation}
When the transported target has finite second moment, CT-OPD supports native
$x_0$-, noise-, or velocity-prediction through a counterfactual state-target
coupling.  The categorical specialization requires a measurable abduction
kernel or right-inverse supported on
$F_t(\hat Y_t^\student,\cdot,c)^{-1}\{S_t^\student\}$.  All base transition
noise, position choices, scheduler variables, remasking decisions, and latent
transition variables needed by this representation jointly form
$\xi_t^\student$, yielding the exact replay
$F_t(Y^\teacher,\xi_t^\student)$.  In masked diffusion, this trajectory
variable reduces to the unresolved-coordinate indicator.

\subsubsection{Interfaces to on-policy diffusion distillation}
\label{app:distillation_interfaces}

On-policy distillation methods are defined by the teacher object they transfer.
TOPD matches the token distribution of a frozen diffusion teacher evaluated on
the student's masked state by reverse KL \citep{ren2026topd}.  dOPSD averages
stop-gradient predictions from later masked states of the same trajectory
\citep{dat2026dopsd}.  OPDLM builds causal supervision from student endpoints in
an aligned AR-diffusion model pair \citep{su2026opdlm}.  OPTD rolls a frozen
diffusion teacher forward from a student state and compresses outcome-preserving
future transitions for lower-NFE decoding \citep{lu2026optd}.
Our OPDLM+BPM baseline combines causal-prefix supervision with byte-space
distribution alignment across tokenizers (\Cref{app:opdlm_bpm}).

CT-OPD supplies a cross-tokenizer VLM interface: an AR VLM provides an
answer-verified text endpoint, while the diffusion VLM supplies a native
trajectory variable from its reverse process.  Canonical reconstruction
combines them in one student-vocabulary denoising problem.  The interface
therefore connects an AR VLM endpoint teacher to a diffusion VLM with a
different tokenizer, while the trajectory variable and denoising objective
remain native to the student.  The controls hold endpoint information,
native CE, and update exposure fixed while varying full masking, mask placement,
or state reconstruction.  Together, they separate the effects of partial states,
student-selected positions, and canonical reconstruction under a common
teacher-information budget.

\section{Theory and Proofs}
\label{app:theory}

\subsection{Objective validity and learning signal}

\subsubsection{Exact projected occupancy law and native support}
\label{app:product_coupling}

Fix the cycle-start policy $\bar\theta$.  The response canvas and each native
prediction alphabet are finite.  Contexts and scorer inputs take values in
standard Borel spaces, ensuring the existence of the conditional laws used
below.  Endpoint adaptation is a measurable function of the teacher response
and training example.  Trace collection uses the original context and random
bits independent of endpoint generation.  Consequently, conditional on
$(c,a^*)$ and a selected stage $j$, the adapted endpoint $Y$ and raw trace $Z$
are independent.  Materialized endpoints and deterministic samplers correspond
to point-mass laws.

The raw trajectory mask is defined on the full canvas before restricting it to
the endpoint's active span.  If native metadata $H_j$ are not determined by the
unresolved set $M_j$, we regard $(M_j,H_j)$ as an augmented mask record and use
its joint law; reconstruction acts on the mask component.  Any subsequent
endpoint-dependent selection belongs to the deterministic architecture
adapter.  Thus independent raw sources can produce a coupled active target
set $U=A(Y)\cap M_j$ and coupled scoring input.

For a bounded measurable test function $f$, the product law gives
\begin{align}
&\E[f(Y,M_j,\cC(Y,M_j))\mid c,a^*,j] \nonumber\\
&\quad=\int\!\int f(y,m,\cC(y,m))\,
 \pi_\teacher(dy\mid c,a^*)\rho_{\bar\theta,j}(dm\mid c).
\end{align}
This proves the exact pushforward law for endpoints and trajectory masks in
\Cref{eq:projected_occupancy_law}.  The same identity holds with the augmented
mask record and the architecture adapter inserted into $f$.  Deterministic
projection turns the independent source draws into CT-OPD's coupled
reconstructed inputs.

\begin{proposition}[Canonical reconstruction from a trajectory mask]
\label{prop:occupancy_projection}
Fix a legal endpoint $y$ and trajectory mask $m\subseteq\mathcal I$.  The state
$\cC(y,m)$ is the unique canvas with unresolved active set $m\cap A(y)$,
visible values equal to $y$, and masked inactive positions.  It lies in the
support of any absorbing corruption law that assigns this active mask pattern
positive probability at the chosen denoising stage.
\end{proposition}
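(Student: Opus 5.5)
The proposition has three parts. First, $\cC(y,m)$ has the three stated properties. Second, it is the only canvas with them. Third, it lies in the support of the relevant absorbing corruption law. I will prove them in that order, working coordinatewise on the finite canvas $\mathcal I$. Each coordinate $i$ falls into exactly one of three disjoint cases: (a) $i\in A(y)\setminus m$, (b) $i\in A(y)\cap m$, or (c) $i\notin A(y)$.

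For existence, I read off \Cref{eq:active_transport} case by case.
\begin{itemize}
\item In case (a), $\cC(y,m)_i=y_i$.
\item In case (b), $\cC(y,m)_i=\masktok$.
\item In case (c), $\cC(y,m)_i=\masktok$.
\end{itemize}
The unresolved active set is $\{i\in A(y):\cC(y,m)_i=\masktok\}$. By cases (a) and (b) this equals $A(y)\cap m=U(y,m)$. Here I need that $y_i\neq\masktok$ on the active span. This holds because legal endpoints exclude mask tokens, as \Cref{app:endpoint_adaptation} requires. So every visible coordinate carries $y_i$, and every inactive coordinate is masked.

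For uniqueness, let $s$ be any canvas with these three properties. Then the following hold coordinatewise.
\begin{itemize}
\item Inactive coordinates of $s$ are masked, so $s$ agrees with $\cC(y,m)$ in case (c).
\item The unresolved active set of $s$ is $A(y)\cap m$, so $s_i=\masktok$ in case (b).
\item In case (a), $s_i$ is not masked. It is therefore visible, and the visible-value condition forces $s_i=y_i$.
\end{itemize}
Hence $s=\cC(y,m)$.

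For the support claim, I formalize an absorbing corruption law at the chosen stage as a kernel with three features. It is supported on canvases whose visible coordinates equal the clean endpoint values and whose other coordinates are $\masktok$. It is determined by the law of its mask pattern. Coordinates outside $A(y)$ are always masked, i.e.\ they are padding beyond the active span. The unique state with active mask pattern $A(y)\cap m$ and endpoint $y$ is exactly $\cC(y,m)$, by the uniqueness step. Its probability under the law therefore equals the probability of that pattern, which is positive by hypothesis. Since the canvas is finite and the alphabet is discrete, a single state of positive probability lies in the support.

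The main obstacle is making "absorbing corruption law" precise enough that inactive coordinates are genuinely masked under it. I will state this explicitly as part of the native-adapter convention. If a backbone instead fills inactive positions with padding tokens, I will apply the claim after the deterministic $\operatorname{Input}_a$ map. That map is a bijection on the relevant states, so it preserves positivity.
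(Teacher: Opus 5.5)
Your proof is correct and follows the paper's route: partitioning $\mathcal I$ into $m\cap A(y)$, $A(y)\setminus m$, and $\mathcal I\setminus A(y)$, together with $y_i\neq\masktok$ on the active span, gives existence and uniqueness, and the support claim holds because an absorbing corruption of $y$ whose masked active set is $A(y)\cap m$ produces exactly $\cC(y,m)$. Your final remark does not need $\operatorname{Input}_a$ to be a bijection, since any deterministic image of a positive-mass state already has positive mass; the paper instead closes by illustrating the hypothesis with independent corruption, where the pattern $U=A(y)\cap m$ has probability $(1-\alpha_t)^{|U|}\alpha_t^{|A(y)|-|U|}>0$.
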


\begin{proof}
Let $y$ be a legal endpoint, so its active tokens differ from $\masktok$, and
let $m\subseteq\mathcal I$.  The sets
$m\cap A(y)$, $A(y)\setminus m$, and $\mathcal I\setminus A(y)$ form a
partition of $\mathcal I$.  The required value on the first set is
$\masktok$, on the second it is $y_i$, and on the third it is $\masktok$ by
the inactive-canvas convention.  These requirements determine every entry
and agree with \Cref{eq:active_transport}, proving existence and uniqueness.

Put $U=m\cap A(y)$.  An absorbing corruption of $y$ produces exactly this
canvas whenever its masked active set is $U$.  Therefore any subset law
assigning positive probability to $U$ assigns positive probability to
$\cC(y,m)$.  In particular, independent active-coordinate corruption at
$0<\alpha_t<1$ gives probability
$(1-\alpha_t)^{|U|}\alpha_t^{|A(y)|-|U|}>0$.  Inactive positions are
deterministic and introduce no additional event.  For $A(y)=\varnothing$,
the empty-product probability is one.  At $\alpha_t=0$, the support contains
only the fully masked active span; at $\alpha_t=1$, it contains only the
fully visible active span.  The proposition's positive-probability condition
includes both boundaries.
\end{proof}

\paragraph{Native block support.}
For a block decoder, fix $y$, the prompt offset, and a response block $b$.
The native conditioning variables are the prompt, earlier clean response
blocks $y_{I_{<b}}$, and the partially resolved current block
$\cC_b(y,g_{b,j})$.  Prompt coordinates within a boundary block are held
fixed.  Applying the coordinate argument to $A(y)\cap I_b$ proves uniqueness
and support whenever the block corruption assigns positive probability to
$U_{b,j}$.  A product reference law of these block-local corruptions gives
the entire bundle positive mass when every component has positive mass.
Each factor in \Cref{eq:bd3_multiblock_objective} therefore has valid native
conditioning.  Earlier blocks supply clean endpoint context for a later block,
while retaining their own corrupted queries elsewhere in the bundle.  The
recorded block-local queries and endpoint prefixes instantiate this conditional
factorization.

\subsubsection{Reverse-event equivalence for native categorical heads}
\label{app:properness}

\paragraph{Absorbing posterior.}
Let $V_t$ be one coordinate of an absorbing forward process with clean token
$Y=y\ne\masktok$.  Survival at time $t$ has probability $\alpha_t$.
For $s<t$ with $0\leq\alpha_t<\alpha_s\leq1$, absorption implies
$\{V_t=y\}\subseteq\{V_s=y\}$.  Hence
\begin{align}
\Pr(V_s=y,V_t=\masktok\mid Y=y)&=\alpha_s-\alpha_t,\\
\Pr(V_s=y\mid V_t=\masktok,Y=y)
&=\frac{\alpha_s-\alpha_t}{1-\alpha_t}=\gamma.
\end{align}
The conditional mass $1-\gamma$ corresponds to persistence of the mask.
The absorbing property supplies the nested survival events used in this
posterior calculation.

\paragraph{One event space for restricted and full heads.}
Let $\mathcal V$ be the native scoring alphabet, containing every admissible
target.  Introduce a distinct symbol $\bot\notin\mathcal V$ to tag the
persistence event.  On $\{\bot\}\sqcup\mathcal V$, the two distributions in
the native event space are
\begin{equation}
Q^y=(1-\gamma)\delta_{\bot}+\gamma\delta_y,
\qquad
P_\theta=(1-\gamma)\delta_{\bot}+\gamma p_\theta(\cdot\mid X).
\label{eq:native_reverse_pair}
\end{equation}
They are normalized, and
\begin{align}
D_{\mathrm{KL}}(Q^y\|P)
&=(1-\gamma)\log\frac{1-\gamma}{1-\gamma}
  +\gamma\log\frac{\gamma}{\gamma p(y)}
 =-\gamma\log p(y).
\label{eq:tagged_native_reverse_kl}
\end{align}
Consequently,
\begin{equation}
\gamma^{-1}D_{\mathrm{KL}}(Q^y\|P_\theta)
=-\log p_\theta(y\mid X).
\label{eq:native_reverse_kl}
\end{equation}
At $\gamma=1$, the zero-mass persistence term is defined as zero.  If
$p(y)=0$, both sides are infinite.  If the scoring head excludes
$\masktok$, mapping $\bot$ to $\masktok$ is one-to-one on the event space
and yields the ordinary absorbing reverse-token kernel.  If the head contains
$\masktok$, the event space retains the distinction between persistence and
a mask-category prediction.  Thus \Cref{eq:native_reverse_kl} holds for the
native categorical score of either head.  CT-OPD couples this exact score with
the student's confidence-based reveal process, which determines the
trajectory-mask law.

The full-head state marginal follows by merging $\bot$ and the mask category
into the same observed mask state.  Write $\widehat Q^y,\widehat P$ for the
resulting distributions.
For $0<\gamma<1$ and $p(y)>0$,
\begin{align}
D_{\mathrm{KL}}(\widehat Q^y\|\widehat P)
&=-\gamma\log p(y)
 +(1-\gamma)\log
 \frac{1-\gamma}{1-\gamma+\gamma p(\masktok)},\\
D_{\mathrm{KL}}(Q^y\|P)
 -D_{\mathrm{KL}}(\widehat Q^y\|\widehat P)
&=(1-\gamma)\log\!\left(
1+\frac{\gamma p(\masktok)}{1-\gamma}\right)\geq0.
\label{eq:full_head_coarsening_gap}
\end{align}
This is the exact information lost by merging the two events.  The gap is
zero for a mask-excluding head and tends to zero as $\gamma\uparrow1$.
For the same full head, let $r=1-p(\masktok)>0$ and
$\widetilde p(v)=p(v)/r$ for non-mask categories.  Its training score obeys
\begin{equation}
-\log p(y)=-\log r-\log\widetilde p(y).
\label{eq:full_head_score_split}
\end{equation}
The loss therefore trains the normalized token-value prediction and penalizes
probability assigned to the mask category.  When $r=0$, every admissible
non-mask target has infinite log loss.  At $\gamma=0$, both event laws place
all mass on persistence and the normalization by $\gamma$ is undefined; the
reverse-normalized identity is stated for $\gamma>0$.

\subsubsection{Conditional properness and quantitative calibration}

\paragraph{The information available to each native query.}
Write $X_i$ for the information the architecture permits prediction $i$ to
use, including context, the relevant reconstructed tokens, and native
attention or stage metadata.  For a bidirectional head, this can be the
complete scoring input.  For query $i$ in causal block $b$, it comprises the
prompt, clean earlier blocks, the reconstructed current block, and permitted
metadata, as specified in \Cref{eq:bd3_multiblock_objective}.  A native
predictor is a measurable function $p_i(\cdot\mid X_i)$.  This choice makes
$Q_i$ the exact Bayes target under the architecture's native information set.

\begin{proposition}[Conditional properness and calibration]
\label{prop:properness}
Fix the trajectory-mask law at cycle start and a finite prediction alphabet
containing the targets.  Let $X_i$ be the information available to native
prediction $i$, set
$w_i=\mathbf1\{i\in U\}/(|U|\vee1)$ and
$\bar w_i(X_i)=\E[w_i\mid X_i]$, and define
\begin{equation}
Q_i(v\mid X_i)
=\frac{\E[w_i\mathbf1\{Y_i^\teacher=v\}\mid X_i]}
{\bar w_i(X_i)}
\label{eq:appendix_weighted_target}
\end{equation}
where $\bar w_i(X_i)>0$.  For finite risk, the excess over all predictors
permitted by the same native conditioning information is
\begin{equation}
\cL_{\mathrm{CT}}(p_\theta)-\cL_{\mathrm{CT}}^*
=\E\sum_i\bar w_i(X_i)
D_{\mathrm{KL}}\!\left(Q_i(\cdot\mid X_i)\|p_{\theta,i}(\cdot\mid X_i)\right).
\label{eq:weighted_properness}
\end{equation}
Thus $Q_i$ is the unique optimum wherever $\bar w_i(X_i)>0$, up to null
sets, and excess CT risk controls weighted conditional-distribution error and
excess Bayes token-decision risk.
\end{proposition}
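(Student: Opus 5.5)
We prove the identity by rewriting the CT risk position by position, conditioning each term on its native information $X_i$, and then applying the standard cross-entropy decomposition into entropy plus KL for each conditional distribution.

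\emph{Step 1 (rewriting the risk).} By \Cref{eq:ct_objective},
\[
\cL_{\mathrm{CT}}(p)=\E\sum_{i\in\mathcal I} w_i\bigl(-\log p_i(Y_i^\teacher\mid X_i)\bigr),
\]
since $w_i$ vanishes off $U$ and equals $1/(|U|\vee1)$ on $U$. The sum is over the finite canvas and every summand is nonnegative, so Tonelli lets us exchange sum and expectation.

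\emph{Step 2 (conditioning on $X_i$).} For each $i$ we condition on $X_i$; the conditional laws exist by the standard Borel assumption in \Cref{app:product_coupling}. Because $p_i(\cdot\mid X_i)$ is $X_i$-measurable and the alphabet is finite,
\[
\E[w_i(-\log p_i(Y_i\mid X_i))\mid X_i]=-\sum_v \E[w_i\mathbf1\{Y_i=v\}\mid X_i]\log p_i(v\mid X_i)=\bar w_i(X_i)\,H(Q_i,p_i),
\]
where $H$ denotes cross-entropy and the second equality is \Cref{eq:appendix_weighted_target}. On the event $\{\bar w_i(X_i)=0\}$ we have $w_i=0$ almost surely, because $w_i\ge0$. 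That event therefore contributes nothing, and $Q_i$ only needs to be defined where $\bar w_i>0$. Next we use $H(Q,p)=H(Q)+D_{\mathrm{KL}}(Q\|p)$. This gives
\[
\cL_{\mathrm{CT}}(p)=\E\sum_i\bar w_i H(Q_i)+\E\sum_i\bar w_i D_{\mathrm{KL}}(Q_i\|p_i).
\]
The first term does not depend on $p$. The predictor $p_i=Q_i$ is admissible, since it is a measurable function of $X_i$ alone, so it attains the infimum and $\cL_{\mathrm{CT}}^*=\E\sum_i\bar w_iH(Q_i)$. This proves \Cref{eq:weighted_properness}.

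\emph{Step 3 (uniqueness and the finite-risk caveat).} Uniqueness follows from Gibbs' inequality: the KL term is zero iff $p_i=Q_i$, which is needed only where $\bar w_i>0$, up to null sets. This is the main technical obstacle. Subtracting two expectations is legitimate only when $\E\sum_i\bar w_iH(Q_i)$ is finite, and that holds because $H(Q_i)\le\log|\mathcal V|$ and $\sum_i w_i\le1$. When $p$ has finite risk the KL term is then finite too; when it does not, both sides are $+\infty$. The infinite cases arise from events with $p_i(v)=0$ but $Q_i(v)>0$.

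\emph{Step 4 (calibration).} For the closing claim, Pinsker's inequality bounds $\E\sum_i\bar w_i\|Q_i-p_i\|_{\mathrm{TV}}^2$ by half the excess risk. Let $a_i$ be a token decision that is Bayes for $p_i$. For any bounded token loss, the excess decision risk of $a_i$ under $Q_i$ is at most twice the loss bound times $\|Q_i-p_i\|_{\mathrm{TV}}$. Combining this with Cauchy--Schwarz, using the weights $\bar w_i$ whose total mass is at most one, gives a square-root-of-excess-CT-risk bound.
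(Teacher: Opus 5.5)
Your proposal is correct and follows the paper's proof essentially step for step: condition each weighted summand on its own $X_i$, split the cross-entropy into entropy plus KL (the entropy term is finite because $\E\sum_i\bar w_i\le 1$), get uniqueness from Gibbs' inequality, and then apply Pinsker and Cauchy--Schwarz for the calibration claims. The differences are minor. The paper explicitly checks that $Q_i$ sums to one, using that the alphabet contains every weighted target, and it fixes $Q_i$ arbitrarily where $\bar w_i=0$ so that $p_i=Q_i$ is defined everywhere. It also keeps the factor $\kappa=\Pr(|U|>0)$ to obtain the slightly sharper bound $\sqrt{2\kappa\Delta}$ for $0$--$1$ loss, whereas you bound the total weight by one but allow general bounded token losses.
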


\begin{proof}
Set
\begin{equation}
w_i=\frac{\mathbf1\{i\in U\}}{|U|\vee1},\qquad
\bar w_i(X_i)=\E[w_i\mid X_i].
\label{eq:properness_weight}
\end{equation}
These weights are nonnegative, bounded by one, and satisfy
$\sum_iw_i=\mathbf1\{|U|>0\}$.  Extend $Y_i$ outside the active span by any
fixed vocabulary symbol; those extensions receive zero weight.  For
$\bar w_i(X_i)>0$, define
\begin{equation}
Q_i(v\mid X_i)=\frac{\E[w_i\mathbf1\{Y_i=v\}\mid X_i]}
{\bar w_i(X_i)}.
\label{eq:weighted_native_target}
\end{equation}
Summing its numerator over the finite scoring alphabet gives
$\bar w_i(X_i)$, so $Q_i$ is a probability distribution.  On
$\{\bar w_i=0\}$, choose any fixed categorical distribution.  Nonnegativity
implies $w_i=0$ almost surely on this event, which contributes zero loss.

Conditioning each summand on its own $X_i$ and interchanging finite sums and
nonnegative expectations gives
\begin{align}
\cL_{\mathrm{CT}}(p)
&=\E\sum_i\bar w_i(X_i)
 \sum_v Q_i(v\mid X_i)(-\log p_i(v\mid X_i))\\
&=\E\sum_i\bar w_i(X_i)
 \left[H(Q_i(\cdot\mid X_i))+
 D_{\mathrm{KL}}(Q_i(\cdot\mid X_i)\|p_i(\cdot\mid X_i))\right].
\end{align}
For a vocabulary of size $V$, the entropy term is bounded by
$\log V\,\E\sum_i\bar w_i\leq\log V$.  The measurable choice $p_i=Q_i$
attains this term for all queries simultaneously in the class of
native-context categorical predictors.  Subtracting that Bayes risk proves
\Cref{eq:weighted_properness}.

For finite distributions, KL is nonnegative, and it is zero exactly when
the distributions coincide.  A nonnegative random variable has zero
expectation only if it is zero almost surely.  Therefore a minimizing
predictor equals $Q_i$ almost surely wherever $\bar w_i>0$.  Conversely,
those equalities imply optimal risk.  Uniqueness is distributional: it
identifies the Bayes-optimal conditional prediction $Q_i$ wherever
$\bar w_i>0$, with equivalent parameterizations representing the same optimum.

Using natural logarithms, Pinsker's inequality gives
$\operatorname{TV}(Q_i,p_i)^2\leq\tfrac12D_{\mathrm{KL}}(Q_i\|p_i)$.
Multiplication by $\bar w_i$, summation, and expectation give the quantitative
calibration bound
\begin{equation}
\E\sum_i\bar w_i(X_i)
\operatorname{TV}^2(Q_i,p_i)
\leq\tfrac12\bigl(\cL_{\mathrm{CT}}(p)-\cL_{\mathrm{CT}}^*\bigr).
\label{eq:ct_calibration}
\end{equation}
Zero predicted mass on a target with positive conditional mass gives infinite
risk.  The finite-risk hypothesis makes the excess-risk identity well defined,
and the result follows directly from the conditional KL decomposition.
\end{proof}

The weighting remains necessary even if an active endpoint length is hidden
from a particular query.  For example, consider two equally likely examples
that produce the same $X_i$, with $i$ supervised in both and unresolved counts
one and two.  If their targets at $i$ are distinct, their weights are $1$ and
$1/2$, so the conditional target assigns probabilities $2/3$ and $1/3$.
When $w_i$ is determined by $X_i$, \Cref{eq:weighted_native_target} reduces
to the ordinary conditional endpoint-token law on the supervised queries.
The result holds under arbitrary dependence among $X_i$, $Y_i$, and $w_i$.

\paragraph{Calibration of token decisions.}
\label{app:token_decision_calibration}
Put $\kappa=\Pr(|U|>0)=\E\sum_i\bar w_i$ and
$\Delta=\cL_{\mathrm{CT}}(p)-\cL_{\mathrm{CT}}^*$.  Let
$\widehat v_i$ maximize $p_i(\cdot\mid X_i)$, with a fixed rule to break
ties, and let $v_i^*$ maximize $Q_i(\cdot\mid X_i)$.  For the weighted token
misclassification risk
$\mathcal R_{01}(\widehat v)=\E\sum_iw_i\mathbf1\{\widehat v_i\ne Y_i\}$,
the excess over its Bayes value satisfies
\begin{equation}
\mathcal R_{01}(\widehat v)-\mathcal R_{01}^*
\leq\sqrt{2\kappa\Delta}.
\label{eq:token_decision_calibration}
\end{equation}
To prove this, use $p_i(\widehat v_i)\geq p_i(v_i^*)$ to obtain
\begin{align}
Q_i(v_i^*)-Q_i(\widehat v_i)
&\leq |Q_i(v_i^*)-p_i(v_i^*)|
     +|Q_i(\widehat v_i)-p_i(\widehat v_i)|\\
&\leq 2\operatorname{TV}(Q_i,p_i).
\end{align}
If the two maximizing categories coincide, the left side is zero; otherwise
the two absolute differences are part of the total $\ell_1$ difference.
After weighting and averaging, Cauchy-Schwarz gives
$\E\sum_i\bar w_i\operatorname{TV}(Q_i,p_i)
\leq[\kappa\E\sum_i\bar w_i\operatorname{TV}^2(Q_i,p_i)]^{1/2}$.
Combining this with \Cref{eq:ct_calibration} proves the claim.
If $\kappa=0$, every weighted risk is zero.  If $\kappa>0$, dividing by
$\kappa$ yields the normalized bounds on the supervised-token population,
including $\kappa^{-1}\E\sum_i\bar w_i\operatorname{TV}(Q_i,p_i)
\leq\sqrt{\Delta/(2\kappa)}$.

\subsubsection{Token gradients with equal terminal rewards}
\label{app:starvation_gradient}

\begin{proposition}[Token signal with equal terminal rewards]
\label{prop:terminal_starvation}
If $r_1=\cdots=r_G$, every centered advantage
$A_k=(r_k-\bar r)/(\sigma_r+\epsilon)$, $\epsilon>0$, vanishes.  For a CT-OPD
state with $N=|U|>0$ and native logits $z_{i,v}$,
\begin{equation}
\frac{\partial\ell_{\mathrm{CT}}}{\partial z_{i,v}}
=\frac{p_{\theta,i}(v\mid X_i)-\mathbf1\{v=Y_i^\teacher\}}{N},
\qquad
\|\nabla_{z_i}\ell_{\mathrm{CT}}\|_1
=\frac{2(1-p_{\theta,i}(Y_i^\teacher\mid X_i))}{N}.
\label{eq:ct_noncontrast_risk}
\end{equation}
Every supervised token with
$p_{\theta,i}(Y_i^\teacher\mid X_i)<1$ therefore has a nonzero gradient with
respect to its native logits when all sampled terminal rewards agree.
\end{proposition}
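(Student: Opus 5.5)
The proof has two independent parts, and each is a direct computation.

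\emph{Centered advantages.} If $r_1=\cdots=r_G=r$, then $\bar r=G^{-1}\sum_k r_k=r$, so every numerator $r_k-\bar r$ is zero. The group standard deviation is $\sigma_r=0$, but the denominator is $\sigma_r+\epsilon=\epsilon>0$, so each quotient is well defined and equals zero. Hence $A_k=0$ for every $k$. The only role of $\epsilon>0$ is to rule out a $0/0$ form.

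\emph{Logit gradient.} Write the per-state CT loss from \Cref{eq:ct_objective} as
\[
\ell_{\mathrm{CT}}=-N^{-1}\sum_{u\in U}\log p_{\theta,u}(Y_u^\teacher\mid X_u),
\]
with $N=|U|>0$, so the normalizer $|U|\vee1$ equals $N$. I treat the native logits $z_{i,\cdot}$ of position $i$ as the variables. The native categorical predictor is a softmax over the fixed finite support (e.g.\ $\mathcal V_L$, $\mathcal V_D$, $\mathcal V_M$), so
\[
\log p_{\theta,i}(v\mid X_i)=z_{i,v}-\log\sum_{v'}e^{z_{i,v'}}.
\]
The logits $z_{i,\cdot}$ enter only the summand $u=i$, because other positions' log-probabilities are functions of their own logits. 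The standard softmax identity $\partial(-\log p_i(y))/\partial z_{i,v}=p_i(v)-\mathbf 1\{v=y\}$, divided by $N$, gives the first formula. If $i\notin U$, the gradient is zero, so the statement concerns supervised $i$.

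\emph{$\ell_1$ norm.} Summing absolute values over $v$: the entry $v=Y_i^\teacher$ contributes $(1-p_i(Y_i^\teacher))/N$, since $p_i\le1$. The remaining entries contribute $\sum_{v\ne Y_i^\teacher}p_i(v)/N=(1-p_i(Y_i^\teacher))/N$, by normalization over the support. The total is $2(1-p_i(Y_i^\teacher))/N$.

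The final claim follows immediately: if $p_{\theta,i}(Y_i^\teacher\mid X_i)<1$, this norm is strictly positive, so some logit partial derivative is nonzero. The CT loss never involves the rewards $r_k$, so this signal persists exactly when all $A_k$ vanish.

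The main obstacle is bookkeeping about the support, not analysis. The target $Y_i^\teacher$ must lie in the normalized alphabet, which holds because endpoint adaptation excludes $\masktok$ and forbidden specials from the text supports. For MMaDA's full head $\mathcal V_M$, the softmax identity still applies even with mask mass included, consistent with \Cref{eq:full_head_score_split}. I would also note that the gradient is taken with respect to logits, not parameters, since a parameter gradient could vanish through the network's Jacobian.
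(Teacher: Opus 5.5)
Your proposal is correct and follows the paper's proof essentially step for step: the advantages vanish because $r_k-\bar r=0$ over the positive denominator $\epsilon$, the first identity comes from differentiating $N^{-1}\bigl[-z_{i,Y_i^{\teacher}}+\log\sum_v e^{z_{i,v}}\bigr]$, and the $\ell_1$ norm comes from pairing the target component $(1-p_{\theta,i}(Y_i^{\teacher}\mid X_i))/N$ with non-target components that sum to the same value. Your added remarks on the prediction support and on logit-level versus parameter-level gradients are sound but not needed for the argument.
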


\begin{proof}[Proof of \Cref{prop:terminal_starvation}]
If all rewards equal $r$, their mean is $r$, their standard deviation is zero,
and each centered advantage is $0/(0+\epsilon)=0$.  Every task-gradient term
multiplied by that advantage is therefore zero, including a clipped
importance-ratio term.  Auxiliary regularizers remain separate from this task
signal.

For a fixed CT example with $N=|U|>0$, use finite logits over the native
scoring alphabet and write the contribution of supervised token $i$ as
$\ell_i=N^{-1}[-z_{i,Y_i}+\log\sum_v\exp z_{i,v}]$.
Differentiating gives the first identity in
\Cref{eq:ct_noncontrast_risk}.  Its target component has magnitude
$(1-p_i(Y_i))/N$; its non-target components are nonnegative and sum to the
same quantity.  Adding these magnitudes proves the second identity.  It is
strictly positive when $p_i(Y_i)<1$, regardless of terminal rewards.  An
empty target set contributes zero, and a one-category head is already exact.
\end{proof}

\subsection{Trajectory-mask allocation and reconstructed context}

\subsubsection{Trace allocation at matched cardinality}
\label{app:trace_allocation}

Fix the context, endpoint $y$, scorer, and count
$1\leq n\leq L_y=|A(y)|$, with positive probability for the trace count
being analyzed.  All expectations below condition on these objects.  Let
$\pi_i^{\mathrm{tr}}$ be the trace inclusion probability and
$d_i^{\mathrm{tr}}$ the expected token loss conditional on inclusion.  If
$\pi_i^{\mathrm{tr}}=0$, choose any finite value for
$d_i^{\mathrm{tr}}$, whose coefficient is zero.  Assume the conditional
losses are integrable.  Uniform size-$n$ active masking includes each
position with probability
$\binom{L_y-1}{n-1}/\binom{L_y}{n}=n/L_y$, giving
\begin{align}
R_{\mathrm{tr}}^{(n)}&=\frac1n\sum_i\pi_i^{\mathrm{tr}}d_i^{\mathrm{tr}},
& R_{\mathrm{rand}}^{(n)}&=\frac1{L_y}\sum_i d_i^{\mathrm{rand}}.
\end{align}
Their difference has the exact decomposition
\begin{equation}
R_{\mathrm{tr}}^{(n)}-R_{\mathrm{rand}}^{(n)}
=\frac{L_y}{n}\operatorname{Cov}_{i}
(\pi_i^{\mathrm{tr}},d_i^{\mathrm{rand}})
+\frac1n\sum_i\pi_i^{\mathrm{tr}}
(d_i^{\mathrm{tr}}-d_i^{\mathrm{rand}}).
\label{eq:trace_allocation}
\end{equation}
Since each trace set has size $n$, summing its inclusion indicators before
expectation gives $\sum_i\pi_i^{\mathrm{tr}}=n$.  Add and subtract
$n^{-1}\sum_i\pi_i^{\mathrm{tr}}d_i^{\mathrm{rand}}$.  With covariance
taken over a uniformly selected active position,
\begin{align}
\frac1n\sum_i\pi_i^{\mathrm{tr}}d_i^{\mathrm{rand}}
-\frac1{L_y}\sum_i d_i^{\mathrm{rand}}
&=\frac{L_y}{n}\left[
 \frac1{L_y}\sum_i\pi_i^{\mathrm{tr}}d_i^{\mathrm{rand}}
 -\frac{n}{L_y}\frac1{L_y}\sum_i d_i^{\mathrm{rand}}\right]\\
&=\frac{L_y}{n}\operatorname{Cov}_i
 (\pi_i^{\mathrm{tr}},d_i^{\mathrm{rand}}).
\end{align}
The remaining term is
$n^{-1}\sum_i\pi_i^{\mathrm{tr}}(d_i^{\mathrm{tr}}-d_i^{\mathrm{rand}})$,
which proves \Cref{eq:trace_allocation}.  The second term retains context
changes induced by the joint mask law even if the two laws have identical
marginal inclusion probabilities.

At $n=L_y$, the active subset is unique.  Matching native metadata across the
two arms makes their reconstructed contexts identical and every inclusion
probability equal to one, so both terms vanish exactly.  Full masking is
therefore the zero-allocation boundary; at partial mask levels, the
mask-allocation and joint-context terms in \Cref{eq:trace_allocation} may be
nonzero.

For variable counts, set $R^{(0)}=0$.  Given trace and random count laws
$q_{\mathrm{tr}}$ and $q_{\mathrm{rand}}$, their risks satisfy
\begin{align}
R_{\mathrm{tr}}-R_{\mathrm{rand}}
={}&\sum_n q_{\mathrm{tr}}(n)
 (R_{\mathrm{tr}}^{(n)}-R_{\mathrm{rand}}^{(n)})\nonumber\\
&+\sum_n(q_{\mathrm{tr}}(n)-q_{\mathrm{rand}}(n))R_{\mathrm{rand}}^{(n)}.
\label{eq:trace_count_mixture}
\end{align}
Terms with $q_{\mathrm{tr}}(n)=0$ have zero contribution in the first sum.
The second sum vanishes under a common count law.  The Random baseline
preserves each collected trace's active count when randomizing positions.  The
identity decomposes the collection-scorer risk gap into mask-allocation and
joint-context terms; the final benchmark comparison measures the trained
checkpoints.

\subsubsection{Finite-family value of reconstructed context}
\label{app:context_information}

The allocation identity compares a trajectory-mask law with a reference mask
law at a fixed scorer.  We next separate the Bayes information in reconstructed context from the ability
of a fixed predictor family to use that context.  Assume
$\kappa=\Pr(|U|>0)>0$ and define the supervised-token probability measure
\begin{equation}
\E_{\dagger}[h(I,X_I,T)]
=\frac1\kappa\E\sum_iw_i h(i,X_i,Y_i)
\label{eq:supervised_token_measure}
\end{equation}
for bounded measurable $h$.  This first conditions on a nonempty example and
then samples one unresolved active coordinate uniformly.

Let $\mathsf F=(I,X_I)$ be the full native query information and let
$\mathsf B=b(\mathsf F)$ retain the prompt, query coordinate, and common
architecture metadata while omitting the reconstructed visible response
tokens.  For predictor families $\mathcal H_{\mathsf F}$ and
$\mathcal H_{\mathsf B}$, denote their optimal unnormalized weighted log
losses by $\mathcal L_{\mathcal H_{\mathsf F}}^*$ and
$\mathcal L_{\mathcal H_{\mathsf B}}^*$.  Define their normalized
approximation gaps above the corresponding Bayes risks by
\begin{align}
a_{\mathsf F}
&=\kappa^{-1}\mathcal L_{\mathcal H_{\mathsf F}}^*
  -H_{\dagger}(T\mid\mathsf F),\nonumber\\
a_{\mathsf B}
&=\kappa^{-1}\mathcal L_{\mathcal H_{\mathsf B}}^*
  -H_{\dagger}(T\mid\mathsf B).
\label{eq:finite_family_approximation_gaps}
\end{align}
Properness gives $a_{\mathsf F},a_{\mathsf B}\geq0$.  Direct subtraction yields
the exact finite-family decomposition
\begin{equation}
\mathcal L_{\mathcal H_{\mathsf B}}^*
-\mathcal L_{\mathcal H_{\mathsf F}}^*
=\kappa\!\left[
I_{\dagger}(T;\mathsf F\mid\mathsf B)
+a_{\mathsf B}-a_{\mathsf F}\right].
\label{eq:context_information_gain}
\end{equation}
The first term is the additional Bayes information in the native query; the
second is the additional approximation advantage available to the chosen
finite predictor family.  In the deterministic-endpoint regime,
$I_{\dagger}(T;\mathsf F\mid\mathsf B)=0$ and
\Cref{eq:context_information_gain} reduces exactly to
$\kappa(a_{\mathsf B}-a_{\mathsf F})$; reconstructed visible tokens improve
attainable loss whenever $a_{\mathsf F}<a_{\mathsf B}$.
\Cref{eq:trace_allocation} separates the fixed-scorer mask comparison into
position allocation and the contexts induced by the two mask laws.
\Cref{eq:context_information_gain} separately characterizes what reconstructed
visible response context contributes to the best predictor in a chosen family:
Bayes information and finite-family approximation advantage.  For $\kappa=0$,
the unnormalized risks are all zero and no normalized token measure is required.

\subsubsection{Transporting native difficulty to teacher reconstruction}
\label{app:difficulty_transport}

\paragraph{Adjacent nested masks: identity and component bound.}
The projected unresolved-coordinate sets from adjacent states in one monotone
rollout satisfy
$K_j:=U_{j+1}\subset U_j\subseteq A(y)$.  Write
$m_j=|U_j|$, $n_j=|K_j|$, $R_j=U_j\setminus K_j$, and
$r_j=|R_j|$, with $0<n_j<m_j$.  For
$\overline q_B=|B|^{-1}\sum_{i\in B}q_i$, define
\begin{equation}
\mathfrak M_j(q)=\overline q_{K_j}-\overline q_{U_j}.
\label{eq:adjacent_trajectory_mask_margin}
\end{equation}
Let $S_j$ be the raw parent state and let $X_j=\cC(y,U_j)$ be the
endpoint-reconstructed parent state induced by its projected trajectory mask.
At the fixed collection scorer, put
\begin{align}
h_{j,i}&=-\log\max_v p_{\theta,i}(v\mid S_j),
&a_{j,i}&=\log\frac{\max_v p_{\theta,i}(v\mid S_j)}
 {p_{\theta,i}(y_i\mid S_j)},\nonumber\\
d_{j,i}&=-\log p_{\theta,i}(y_i\mid X_j),
&\delta_{j,i}&=d_{j,i}-h_{j,i}-a_{j,i}.
\label{eq:adjacent_transport_scores}
\end{align}
Set
\begin{equation}
(\mathcal H_j,\mathcal A_j,\mathcal E_j,\mathcal T_j)
=(\mathfrak M_j(h_j),\mathfrak M_j(a_j),
  \mathfrak M_j(\delta_j),\mathfrak M_j(d_j)).
\label{eq:adjacent_transport_terms}
\end{equation}
Restricting the raw masks to the teacher endpoint's active coordinates
preserves nesting.  The nontrivial retained-versus-revealed domain is exactly
$0<n_j<m_j$.

\begin{proposition}[Observable transfer across adjacent masks]
\label{prop:difficulty_transport}
If $V_j$ is a uniformly random $n_j$-subset of $U_j$, independent of the fixed
score vector $q$, then
\begin{equation}
\mathfrak M_j(q)
=\E_{V_j}[\overline q_{K_j}-\overline q_{V_j}]
=\frac{r_j}{m_j}(\overline q_{K_j}-\overline q_{R_j}).
\label{eq:adjacent_random_identity}
\end{equation}
Moreover,
\begin{equation}
\mathcal T_j
=\E_{V_j}[\overline d_{K_j}-\overline d_{V_j}]
=\mathcal H_j+\mathcal A_j+\mathcal E_j.
\label{eq:adjacent_transport_certificate}
\end{equation}
Thus $\mathcal T_j>0$ means exactly that reconstructed teacher-target loss is
higher on the student-retained coordinates than on the expected count-matched
random subset at the same parent query.
\end{proposition}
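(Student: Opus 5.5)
The plan is to prove the two displays separately, treating the score vector $q$ (and later $h_j,a_j,\delta_j,d_j$) as fixed numbers attached to the coordinates of $U_j$. Here "fixed" means computed at the frozen collection scorer on the fixed raw parent $S_j$ and reconstructed parent $X_j$, so none of them depends on the random subset $V_j$. All randomness in \Cref{eq:adjacent_random_identity} then comes from $V_j$ alone, and the claims reduce to finite averaging identities.

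\textbf{First identity.} By exchangeability, each $i\in U_j$ lies in a uniform $n_j$-subset with probability $\binom{m_j-1}{n_j-1}/\binom{m_j}{n_j}=n_j/m_j$. The same count appears in \Cref{app:trace_allocation}. Hence
\[
\E_{V_j}\overline q_{V_j}=\frac1{n_j}\sum_{i\in U_j}\frac{n_j}{m_j}q_i=\overline q_{U_j}.
\]
Since $K_j$ is deterministic given the rollout, $\E_{V_j}[\overline q_{K_j}-\overline q_{V_j}]=\overline q_{K_j}-\overline q_{U_j}=\mathfrak M_j(q)$.

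For the second equality, use the disjoint decomposition $U_j=K_j\sqcup R_j$. It gives
\[
m_j\overline q_{U_j}=n_j\overline q_{K_j}+r_j\overline q_{R_j}, \qquad m_j=n_j+r_j.
\]
Therefore
\[
\overline q_{K_j}-\overline q_{U_j}=\frac{(n_j+r_j)\overline q_{K_j}-n_j\overline q_{K_j}-r_j\overline q_{R_j}}{m_j}=\frac{r_j}{m_j}\bigl(\overline q_{K_j}-\overline q_{R_j}\bigr).
\]
The hypothesis $0<n_j<m_j$ guarantees that $K_j$ and $R_j$ are nonempty, so both averages are defined. The finiteness of the scores should be stated explicitly: $d_{j,i}$ and $a_{j,i}$ are finite whenever the relevant probabilities $p_{\theta,i}(y_i\mid\cdot)$ are positive. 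I would assume this, as the paper does elsewhere with its finite-risk conventions.

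\textbf{Second display.} The first equality in \Cref{eq:adjacent_transport_certificate} is \Cref{eq:adjacent_random_identity} applied with $q=d_j$, together with the definition $\mathcal T_j=\mathfrak M_j(d_j)$. For the decomposition, note that \Cref{eq:adjacent_transport_scores} defines $\delta_{j,i}$ so that
\[
d_{j,i}=h_{j,i}+a_{j,i}+\delta_{j,i}
\]
holds coordinatewise, as a tautology. The map $q\mapsto\mathfrak M_j(q)$ is linear, being a difference of two subset averages. So
\[
\mathfrak M_j(d_j)=\mathfrak M_j(h_j)+\mathfrak M_j(a_j)+\mathfrak M_j(\delta_j)=\mathcal H_j+\mathcal A_j+\mathcal E_j.
\]
The interpretive sentence about $\mathcal T_j>0$ then follows directly from the first equality.

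\textbf{Main obstacle.} The only real subtlety is justifying that $K_j$ and the scores are held fixed while $V_j$ varies. Specifically, $d_j$ must be evaluated on the parent reconstruction $X_j=\cC(y,U_j)$ for both $K_j$ and $V_j$, not on child reconstructions. That keeps the expectation purely over positions within a shared context, as emphasized before \Cref{eq:main_retained_revealed}. I would also record that restricting to $A(y)$ preserves $K_j\subset U_j$, which the paper already states, so the nesting hypothesis holds.
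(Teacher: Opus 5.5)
Your proposal is correct and takes essentially the same route as the paper's proof. The inclusion probability $n_j/m_j$ gives $\E_{V_j}\overline q_{V_j}=\overline q_{U_j}$, the disjoint split $U_j=K_j\sqcup R_j$ gives the retained-versus-revealed form, and linearity of $\mathfrak M_j$ applied to the coordinatewise identity $d_{j,i}=h_{j,i}+a_{j,i}+\delta_{j,i}$ gives the decomposition; the paper's proof additionally derives the component bound $\mathcal E_j\geq-\mathcal B_j$, which is outside the stated proposition.
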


For the same comparison, define the correction budget
\begin{align}
\mathcal B_j
&:=\sum_{i\in U_j}
\left|\frac{\mathbf1\{i\in K_j\}}{n_j}-\frac1{m_j}\right|
|\delta_{j,i}|\nonumber\\
&=\frac{r_j}{m_j}\left[
 \frac1{n_j}\sum_{i\in K_j}|\delta_{j,i}|
 +\frac1{r_j}\sum_{i\in R_j}|\delta_{j,i}|
\right].
\label{eq:adjacent_transport_budget_groups}
\end{align}

\begin{proof}
For every $i\in U_j$, uniform sampling without replacement gives
$\Pr(i\in V_j)=n_j/m_j$.  Therefore
\begin{equation}
\E_{V_j}\!\left[\frac1{n_j}\sum_{i\in V_j}q_i\right]
=\frac1{n_j}\sum_{i\in U_j}\frac{n_j}{m_j}q_i
=\frac1{m_j}\sum_{i\in U_j}q_i.
\label{eq:adjacent_random_mean}
\end{equation}
This proves the first equality in \Cref{eq:adjacent_random_identity}.  Since
$U_j=U_{j+1}\mathbin{\dot\cup}R_j$,
\begin{align}
\mathfrak M_j(q)
&=\overline q_{U_{j+1}}
-\frac{n_j\overline q_{U_{j+1}}
      +|R_j|\overline q_{R_j}}{m_j}\nonumber\\
&=\frac{|R_j|}{m_j}
 \left(\overline q_{U_{j+1}}-\overline q_{R_j}\right),
\label{eq:adjacent_retained_revealed}
\end{align}
which proves the second equality.

The softmax definitions in \Cref{eq:adjacent_transport_scores} give,
coordinate by coordinate,
\begin{equation}
-\log p_{\theta,i}(y_i\mid S_j)=h_{j,i}+a_{j,i},
\qquad
d_{j,i}=h_{j,i}+a_{j,i}+\delta_{j,i}.
\label{eq:adjacent_raw_target_identity}
\end{equation}
The functional $\mathfrak M_j$ is linear, so applying it to the second identity
proves the equality in \Cref{eq:adjacent_transport_certificate}.  Put
$b_{j,i}=\mathbf1\{i\in K_j\}/n_j-1/m_j$.
Then $\mathcal E_j=\sum_{i\in U_j}b_{j,i}\delta_{j,i}$, and the triangle
inequality gives
\begin{equation}
\mathcal E_j
\geq-\sum_{i\in U_j}|b_{j,i}|\,|\delta_{j,i}|
=-\mathcal B_j.
\label{eq:adjacent_error_bound}
\end{equation}
Evaluating the two constant absolute weights on $K_j$ and $R_j$ gives
\Cref{eq:adjacent_transport_budget_groups}; combining
$\mathcal E_j\geq-\mathcal B_j$ with
\Cref{eq:adjacent_transport_certificate} yields the corresponding componentwise
lower bound.
\end{proof}

Here $\mathcal T_j$ is the signed transport statistic.  On the frozen panel,
the positive reconstruction term $\mathcal E$ accounts for nearly all of the
pooled margin, and reconstructed teacher-target loss is higher on retained
positions than under the count-matched expectation.  Because $\mathcal E$ is
signed, this contribution is absent from the componentwise absolute envelope
$\mathcal H_j+\mathcal A_j-\mathcal B_j$.

The same proof applies to any finite coordinate score.  In particular,
$c_{j,i}=2(1-p_{\theta,i}(y_i\mid X_j))$ is the per-coordinate contribution to
the stacked $\ell_1$ logit-gradient magnitude before normalization by the
number of supervised positions.  Thus $\mathfrak M_j(c_j)$ is exactly its
student-retained mean minus the expected mean of a count-matched random subset
scored on the same reconstructed parent.  This identity measures allocation on
a common reconstructed parent, while the Random reconstruction provides
the accompanying context comparison in the full-risk evaluation.

\IfFileExists{generated/CT_OPD_W3_PAPER_TABLES_COMPLETE.tex}{%
\begin{table}[H]
\centering
\caption{\textbf{Difficulty remains concentrated at trajectory-selected positions.}
All quantities use the frozen Base scorer.  Each measured margin reports a
point estimate and prompt-paired 95\% CI.  The pooled row first
averages available $75\%\!\to\!50\%$ and $50\%\!\to\!25\%$ transitions within
each prompt.  $\mathcal T$ is the endpoint-target NLL allocation margin over
the exact uniform count-matched subset expectation at the same reconstructed
parent.  The signed components satisfy $\mathcal T=\mathcal H+\mathcal A+\mathcal E$.}
\label{tab:w3_difficulty_transport}
\arrayrulecolor{tableline}
\scriptsize
\setlength{\tabcolsep}{2.2pt}
\renewcommand{\arraystretch}{1.16}
\resizebox{\textwidth}{!}{%
\begin{tabular}{@{}lrrrrr@{}}
\toprule
Scope & $n$ & $\mathcal H$ & $\mathcal A$ & $\mathcal E$ & $\mathcal T$ \\
\midrule
Pooled visible parent & 445 & $0.2975\;[\,0.2795,\,0.3171\,]$ & $-0.2828\;[\,-0.4207,\,-0.1387\,]$ & $1.0976\;[\,0.9876,\,1.2041\,]$ & $1.1123\;[\,0.9325,\,1.3053\,]$ \\
$75\%\!\to\!50\%$ & 445 & $0.2655\;[\,0.2450,\,0.2874\,]$ & $-0.2274\;[\,-0.3634,\,-0.0915\,]$ & $1.0228\;[\,0.9158,\,1.1337\,]$ & $1.0609\;[\,0.8734,\,1.2490\,]$ \\
$50\%\!\to\!25\%$ & 354 & $0.2948\;[\,0.2733,\,0.3173\,]$ & $-0.5834\;[\,-0.7441,\,-0.4169\,]$ & $1.0328\;[\,0.8845,\,1.1787\,]$ & $0.7442\;[\,0.5529,\,0.9306\,]$ \\
\bottomrule
\end{tabular}%
}
\vspace{2pt}

\begin{tabular}{@{}lr@{}}
\toprule
Scope & Mean token-gradient magnitude margin \\
\midrule
Pooled visible parent & $0.1050\;[\,0.0907,\,0.1187\,]$ \\
$75\%\!\to\!50\%$ & $0.0947\;[\,0.0820,\,0.1073\,]$ \\
$50\%\!\to\!25\%$ & $0.1079\;[\,0.0842,\,0.1302\,]$ \\
\bottomrule
\end{tabular}
\vspace{1pt}

\textcolor{tablemuted}{\scriptsize Token-gradient magnitude is the mean unnormalized categorical-logit $\ell_1$ norm.\quad Identity control
$100\%\!\to\!75\%$: residual $-1.498\!\times\!10^{-17}\;[\,-6.432\!\times\!10^{-17},\,3.569\!\times\!10^{-17}\,]$.}
\arrayrulecolor{black}
\end{table}
}{}

\paragraph{General fixed-count mask laws.}
Reconstruction combines trace-derived mask allocation with visible
teacher values.  The following comparison relates the raw confidence profile
to teacher-target difficulty after reconstruction.
The first comparison holds a context $c$, endpoint $y$, collection scorer
$\theta=\bar\theta$, and count $1\leq n\leq L_y$ fixed, with positive trace
probability for that count.  For a native prediction domain $D$, use the local
active set $A_D(y)=A(y)\cap D$ and let $L_y=|A_D(y)|$ throughout this section.
For block models, we condition on the block and required metadata and form the
reference masks over its active coordinates.  Explicit mixture weights in
\Cref{eq:trace_count_mixture} aggregate the resulting domain- and count-specific
comparisons.

For each active coordinate $i$, specify a raw proposal query $S_i$ drawn from
the recorded student trace at a decision where $i$ is unresolved.  We use the
last native proposal for $i$ at or before the selected trace stage, so every
compared coordinate has a recorded masked proposal.
An already revealed coordinate is probed at its recorded earlier proposal,
when its own value was still masked.  Final inclusion probabilities
$\pi_i^{\mathrm{tr}}$ and modal uncertainty $h_i$ are evaluated separately,
allowing arbitrary history-dependent selection rules.  The same scorer is
used throughout the comparison.  Re-evaluating stored queries at
$\theta\ne\bar\theta$ gives the corresponding bound for the later model's
uncertainty profile.

Couple $S_i$ to a reconstructed query $X_i^{\mathrm{rand}}$ whose marginal
law is uniform size-$n$ active masking conditional on including $i$, with the
same endpoint and native reference metadata used to define
$d_i^{\mathrm{rand}}$ in \Cref{app:trace_allocation}.  Any fixed coupling
with these marginals is admissible.  With finite logits, define
\begin{align}
h_i&=\E[-\log\max_v p_{\theta,i}(v\mid S_i)],\nonumber\\
a_i&=\E\log\frac{\max_v p_{\theta,i}(v\mid S_i)}
 {p_{\theta,i}(y_i\mid S_i)},\nonumber\\
\eta_i&=\E\operatorname{osc}_v
 [z_{i,v}(X_i^{\mathrm{rand}})-z_{i,v}(S_i)].
\label{eq:general_transport_scores}
\end{align}
In particular,
\begin{equation}
\E[-\log p_{\theta,i}(y_i\mid S_i)]=h_i+a_i,
\qquad a_i\geq0.
\label{eq:raw_uncertainty_target_gap}
\end{equation}
The target-gap term $a_i$ records student-teacher disagreement separately from
modal uncertainty, while $\eta_i$ measures the query shift induced by
reconstruction.  All logits are finite, and all displayed expectations are
assumed finite.  These expectations define the prediction-difficulty profile
for the fixed collection law and chosen reference coupling.

\begin{proposition}[General fixed-count difficulty transport]
\label{prop:difficulty_transport_general}
For any common offset $a_0$, set $\epsilon_i=|a_i-a_0|+\eta_i$.  Then
\begin{equation}
\frac{L_y}{n}\operatorname{Cov}_i
 (\pi_i^{\mathrm{tr}},d_i^{\mathrm{rand}})
\geq\frac{L_y}{n}\operatorname{Cov}_i
 (\pi_i^{\mathrm{tr}},h_i)
-\frac1n\sum_i\left|\pi_i^{\mathrm{tr}}-\frac n{L_y}\right|\epsilon_i.
\label{eq:difficulty_transport_general}
\end{equation}
\end{proposition}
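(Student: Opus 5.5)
The plan is to rewrite both covariances as signed sums with the same centered weights, and then bound the coordinatewise difference of scores using a log-softmax Lipschitz estimate in the logit oscillation. This is the same linear-functional idea used for $\mathcal E_j\geq-\mathcal B_j$ in \Cref{prop:difficulty_transport}.

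\textbf{Centered weights.} As in \Cref{app:trace_allocation}, every trace set has size $n$, so $\sum_i\pi_i^{\mathrm{tr}}=n$. Put $b_i=\pi_i^{\mathrm{tr}}-n/L_y$, so that $\sum_ib_i=0$. Then, for any finite score vector $q$ on the active coordinates, the covariance over a uniformly chosen active position is
\[
\frac{L_y}{n}\operatorname{Cov}_i(\pi_i^{\mathrm{tr}},q_i)=\frac1n\sum_ib_iq_i.
\]
Applying this to $q=d^{\mathrm{rand}}$ and to $q=h$ and subtracting, the left side of \Cref{eq:difficulty_transport_general} minus the first term on the right equals $n^{-1}\sum_ib_i(d_i^{\mathrm{rand}}-h_i)$. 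Because $\sum_ib_i=0$, we may subtract any common offset $a_0$ inside the parentheses at no cost. The triangle inequality then gives
\[
\frac1n\sum_ib_i(d_i^{\mathrm{rand}}-h_i-a_0)\geq-\frac1n\sum_i|b_i|\,\bigl|d_i^{\mathrm{rand}}-h_i-a_0\bigr|.
\]
It therefore suffices to show $|d_i^{\mathrm{rand}}-h_i-a_0|\leq\epsilon_i$ for every $i$.

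\textbf{Coordinatewise bound.} By \Cref{eq:raw_uncertainty_target_gap},
\[
d_i^{\mathrm{rand}}-h_i-a_0=(a_i-a_0)+\E\bigl[-\log p_{\theta,i}(y_i\mid X_i^{\mathrm{rand}})+\log p_{\theta,i}(y_i\mid S_i)\bigr].
\]
This uses that $d_i^{\mathrm{rand}}$ is determined by the marginal law of $X_i^{\mathrm{rand}}$, so we can evaluate it under the chosen coupling with $S_i$. Write $\Delta_v=z_{i,v}(X_i^{\mathrm{rand}})-z_{i,v}(S_i)$. The log-probability difference inside the expectation is
\[
\Delta_{y_i}-\Bigl(\log\sum_v e^{z_{i,v}(X)}-\log\sum_v e^{z_{i,v}(S)}\Bigr).
\]
The log-sum-exp difference lies in $[\min_v\Delta_v,\max_v\Delta_v]$, since $\log\sum_ve^{z_v+\Delta_v}$ is monotone in each $\Delta_v$ and shifts exactly under constant $\Delta$. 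Hence the whole expression lies in an interval of width $\operatorname{osc}_v\Delta_v$, and its absolute value is at most $\operatorname{osc}_v\Delta_v$. Taking expectations and using $|\E W|\leq\E|W|$ bounds the expectation term by $\eta_i$. The triangle inequality then yields $|d_i^{\mathrm{rand}}-h_i-a_0|\leq|a_i-a_0|+\eta_i=\epsilon_i$, which completes the argument.

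\textbf{Main obstacle.} The only delicate step is the oscillation bound and its interaction with the coupling. One must ensure that $d_i^{\mathrm{rand}}$, defined through uniform masking conditional on including $i$, is computed under the same joint law in which $S_i$ is defined, and that all expectations are finite, which is assumed. Everything else is linear algebra on the centered weights $b_i$.
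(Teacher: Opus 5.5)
Your proof is correct and follows the paper's argument essentially step for step. The paper also writes the difference of the two scaled covariances as $\sum_i b_i(d_i^{\mathrm{rand}}-h_i-a_0)$ with centered weights summing to zero, and bounds $|d_i^{\mathrm{rand}}-h_i-a_0|\le\epsilon_i$ by showing that the log-probability change lies within the logit oscillation. It does this via $\log\sum_v p_{\theta,i}(v\mid S_i)e^{r_v}\in[\min_v r_v,\max_v r_v]$, which is equivalent to your monotonicity-and-shift argument for log-sum-exp. The only blemish is a harmless sign slip: the expression you display is the negative of the integrand $-\log p_{\theta,i}(y_i\mid X_i^{\mathrm{rand}})+\log p_{\theta,i}(y_i\mid S_i)$, which does not matter since you only bound its absolute value.
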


\begin{proof}[Proof of \Cref{prop:difficulty_transport_general}]
For a paired query, write
$r_v=z_{i,v}(X_i^{\mathrm{rand}})-z_{i,v}(S_i)$.  The softmax definition gives
\begin{equation}
\log\frac{p_{\theta,i}(y_i\mid X_i^{\mathrm{rand}})}
 {p_{\theta,i}(y_i\mid S_i)}
=r_{y_i}-\log\sum_v p_{\theta,i}(v\mid S_i)e^{r_v}.
\label{eq:transport_log_probability}
\end{equation}
The sum is a convex combination of $e^{r_v}$, so its logarithm lies between
$\min_v r_v$ and $\max_v r_v$.  Therefore the absolute log-probability
change is at most $\operatorname{osc}_v r_v$.  Taking expectations and using
\Cref{eq:raw_uncertainty_target_gap} yields
\begin{equation}
|d_i^{\mathrm{rand}}-h_i-a_i|\leq\eta_i,
\qquad |d_i^{\mathrm{rand}}-h_i-a_0|\leq|a_i-a_0|+\eta_i=\epsilon_i.
\label{eq:transport_profile_bound}
\end{equation}
Define $b_i=(\pi_i^{\mathrm{tr}}-n/L_y)/n$.  Since each active mask has size
$n$, $\sum_i\pi_i^{\mathrm{tr}}=n$ and $\sum_i b_i=0$.  With uniform
coordinate covariance, write
\begin{align}
A_{\mathrm S}&=\frac{L_y}{n}\operatorname{Cov}_i(\pi_i^{\mathrm{tr}},h_i),
&A_{\mathrm T}&=\frac{L_y}{n}\operatorname{Cov}_i
 (\pi_i^{\mathrm{tr}},d_i^{\mathrm{rand}}),\nonumber\\
A_{\mathrm T}-A_{\mathrm S}
&=\sum_i b_i(d_i^{\mathrm{rand}}-h_i-a_0)
\geq-\sum_i|b_i|\epsilon_i.
\label{eq:transport_centered_allocation}
\end{align}
This is \Cref{eq:difficulty_transport_general}.  A common target gap cancels;
only its variation across positions enters the allocation correction.
\end{proof}

\paragraph{Margin and uniform guarantees.}
\Cref{eq:transport_profile_bound} also gives the pairwise implication
\begin{equation}
h_i-h_k>\epsilon_i+\epsilon_k
\quad\Longrightarrow\quad d_i^{\mathrm{rand}}>d_k^{\mathrm{rand}}.
\label{eq:transport_rank_margin}
\end{equation}
Thus every raw uncertainty ranking separated by the explicit perturbation
margin transfers through reconstruction.  If all
$\epsilon_i\leq\epsilon$, then
\begin{equation}
A_{\mathrm T}\geq A_{\mathrm S}-2(1-n/L_y)\epsilon.
\label{eq:transport_uniform_bound}
\end{equation}
Indeed, put $q=n/L_y$.  For $0\leq x\leq1$, convexity gives
$|x-q|\leq(1-x)q+x(1-q)$.  Summing with $x=\pi_i^{\mathrm{tr}}$ gives
$\sum_i|\pi_i^{\mathrm{tr}}-q|\leq2n(1-q)$.  Full masking has $n=L_y$ and
$\pi_i^{\mathrm{tr}}=1$, so both allocation terms are zero.  Uniform size-$n$
masking likewise has $\pi_i^{\mathrm{tr}}=n/L_y$ and zero marginal allocation.
The conditional comparison covers $n\geq1$, while empty masks contribute zero
to the original objective.  Reference scores and centered coefficients remain
well defined at coordinates with zero trace inclusion.

\paragraph{Separating value replacement from the reference mask law.}
When the native adapter supports it, introduce $X_i^{\mathrm{swap}}$ by
reconstructing the teacher endpoint at the raw proposal's own mask.  Keep
the query coordinate unresolved and use the architecture's prescribed endpoint
prefix and inactive-position convention.  On the same joint coupling define
\begin{align}
\eta_i^{\mathrm{value}}
&=\E\operatorname{osc}_v[z_{i,v}(X_i^{\mathrm{swap}})-z_{i,v}(S_i)],\nonumber\\
\eta_i^{\mathrm{reference}}
&=\E\operatorname{osc}_v[z_{i,v}(X_i^{\mathrm{rand}})-z_{i,v}(X_i^{\mathrm{swap}})],
&\eta_i&\leq\eta_i^{\mathrm{value}}+\eta_i^{\mathrm{reference}}.
\label{eq:transport_drift_split}
\end{align}
The inequality follows from subadditivity of maximum-minus-minimum.
The first term includes replacement of visible values and any required
endpoint-domain or prefix adaptation.  The second measures the move to the
reference mask context, including differences in count or decision stage.
Thus the total $\eta_i$ covers both value replacement and reference-query
changes.  Every admissible coupling obeys the paired bound, and an
adapter-compatible intermediate query yields the displayed value/reference
decomposition.

\paragraph{Sufficient condition for difficulty allocation.}
The sufficient condition
$A_{\mathrm S}>n^{-1}\sum_i|\pi_i^{\mathrm{tr}}-n/L_y|\epsilon_i$
implies that, after reconstruction, the trace-derived mask law assigns more
supervision to positions with higher teacher-target loss than uniform size-$n$
masking.  Its three terms are measurable under the collection and evaluation
laws: native allocation, target disagreement, and query perturbation.  The
matched-count risk difference in \Cref{eq:trace_allocation} adds the joint-context term
$n^{-1}\sum_i\pi_i^{\mathrm{tr}}(d_i^{\mathrm{tr}}-d_i^{\mathrm{rand}})$,
while \Cref{eq:difficulty_transport_general} bounds the allocation component.
Together, these terms describe the matched-count mask comparison evaluated
by the controlled training interventions.

\end{document}